%% file: MetaTrans.tex
\documentclass{article}
\usepackage{microtype}
\usepackage{graphicx}
\usepackage{subcaption}
\usepackage{booktabs} % for professional tables
\usepackage{hyperref}
\usepackage[accepted]{icml2026}

\usepackage{amsmath}
\usepackage{amssymb}
\usepackage{mathtools}
\usepackage{amsthm}
\usepackage[capitalize,noabbrev]{cleveref}
\theoremstyle{plain}
\newtheorem{theorem}{Theorem}

\newtheorem{lemma}{Lemma}

\theoremstyle{definition}
\newtheorem{definition}{Definition}

\theoremstyle{remark}

\usepackage[textsize=tiny]{todonotes}

\usepackage[utf8]{inputenc} % allow utf-8 input
\usepackage[T1]{fontenc}    % use 8-bit T1 fonts
\usepackage{booktabs}       % professional-quality tables
\usepackage{amsfonts}       % blackboard math symbols
\usepackage{nicefrac}       % compact symbols for 1/2, etc.
\usepackage{microtype}      % microtypography
\usepackage[table]{xcolor}         % colors
\usepackage{microtype}
\usepackage{graphicx}

\usepackage{booktabs} % for professional tables

\usepackage{tabularx}
\usepackage{tikz}
\usepackage{multirow}
\definecolor{LightCyan}{rgb}{0.87,0.92,0.96}
\definecolor{forestgreen}{rgb}{0.133, 0.545, 0.133}
\definecolor{LightGreen}{RGB}{245, 255, 245}
\definecolor{m_red}{RGB}{255, 126, 121}
\definecolor{m_yellow}{RGB}{255, 225, 0}
\definecolor{m_green}{RGB}{190, 225, 102}
\definecolor{m_violet}{RGB}{215, 131, 255}
\definecolor{m_darkred}{RGB}{204, 0, 0}
\definecolor{m_blue}{RGB}{0, 102, 204}
\definecolor{m_gray}{RGB}{128, 128, 128}
\definecolor{light_blue}{RGB}{0, 153, 153}
\definecolor{light_purple}{RGB}{153, 153, 255}
\definecolor{m_pink}{RGB}{255, 0, 127}

\usepackage{wrapfig}
\icmltitlerunning{Return of Frustratingly Easy Unsupervised Video Domain Adaptation}

\begin{document}

\twocolumn[
  \icmltitle{Return of Frustratingly Easy Unsupervised Video Domain Adaptation}
  \icmlsetsymbol{equal}{*}

  \begin{icmlauthorlist}
    \icmlauthor{Pengfei Wei}{mtri}
    \icmlauthor{Yiqun Sun}{mtri}
    \icmlauthor{Zhiqiang Xu}{comp}
    \icmlauthor{Yiping Ke}{sch}
    \icmlauthor{Lawrence B.\ Hsieh}{mtri}
  \end{icmlauthorlist}

  \icmlaffiliation{mtri}{Magellan Technology Research Institute (MTRI), Japan}
  \icmlaffiliation{comp}{Mohamed bin Zayed University of Artificial Intelligence, UAE}
  \icmlaffiliation{sch}{Nanyang Technological University, Singapore}

  \icmlcorrespondingauthor{Yiqun Sun}{duke.sun@mtri.co.jp}
  \icmlkeywords{Machine Learning, ICML}
  \vskip 0.3in
]
\printAffiliationsAndNotice{}  % no special notice (required even if empty)

\begin{abstract}
Unsupervised video domain adaptation (UVDA) is a practical but under-explored problem.
In this paper, we propose a frustratingly easy UVDA method, called \emph{MetaTrans}.
Specifically, \emph{MetaTrans} adopts a concise learning objective that contains only two fundamental loss terms.
Despite the simplicity of the learning objective, \emph{MetaTrans} embodies an advanced UVDA idea, that is, handling the spatial and temporal divergence of cross-domain videos separately, through a subtle model architecture design.
By implementing a temporal-static subtraction module, \emph{MetaTrans} effectively removes spatial and temporal divergence.
Extensive empirical evaluations, particularly on various cross-domain action recognition tasks, show substantial absolute adaptation performance enhancement and significantly superior relative performance gain compared with state-of-the-art UVDA baselines.
\end{abstract}

\section{Introduction}
\label{Intro}
Unsupervised video domain adaptation (UVDA) \cite{xu2022video} aims to transfer shared knowledge between different video domains.
It is an under-explored research problem, but has immense potential in various video-based practical applications such as video conferencing, video editing, and video content analysis.
Cross-domain action recognition, as one of the most popular UVDA tasks, has attracted increasing research attention over the past few years \cite{sahoo2021contrast,da2022dual,wei2023unsupervised}. 

Image-based UDA methods \cite{tzeng2017adversarial} can be applied directly to the UVDA task but achieve unsatisfactory performance, as they only handle the spatial divergence of static frames while overlooking the temporal dependency between frames during adaptation.
Some works \cite{da2022unsupervised,reddy2024unsupervised,xu2024leveraging} have emerged specifically for UVDA putting emphasis on temporal alignment.
The key idea is to reduce the cross-domain divergence by aligning frame-level and video-level features.
In general, these methods do not distinguish the spatial and temporal domain divergence and handle them together.

Recently, researchers have begun to focus on reducing both the spatial and temporal divergence in UVDA.
A \emph{TranSVAE} framework \cite{wei2023unsupervised} has been proposed to address UVDA by disentanglement.
It assumes that a video is generated by static and dynamic latent factors, and proposes the simultaneous elimination of spatial and temporal divergence through a tailored static and dynamic latent feature disentanglement, specifically engineered for adaptation.
In \cite{lin2024human}, a human encoder and a context encoder are developed to decouple the static context and the dynamic human motion, although it is specifically designed for the human action recognition task.
These methods attain state-of-the-art adaptation results on several UVDA benchmark datasets, and remarkably they surpass some multi-modality methods by only using RGB features.

Despite the fact that these works have made strides in UVDA, we observe a potential bottleneck that may impede their practical applicability.
They generally use complex models that incorporate multiple submodules, e.g., with up to 7 loss terms in \emph{TranSVAE} and 5 loss terms in HCT in their final objective function.
To obtain the optimal adaptation performance for a UVDA task, multiple training runs are needed to find the best weight for each loss.
Although the authors propose to reduce the number of training runs by some tricks, they still need around ten thousand training runs for a single task if using greedy grid search.
In this paper, our aim is to develop a UVDA method that employs a streamlined model structure while being capable of achieving state-of-the-art adaptation performance.
Theoretical analysis of image-based UDA \cite{ben2006analysis} indicates that adaptation performance improvements often originate from minimized source training risks and reduced domain divergence.
This motivates us to aspire our method to be as simple as this foundational UDA work, comprising only two losses in the final objective function: the source supervision loss and the domain divergence minimization loss. 
By doing so, we can significantly reduce the number of training runs required for the weight search.

However, simply using the two losses is insufficient to attain state-of-the-art UVDA performance. 
The key is to implement an advanced adaptation idea with this simple objective.
In particular, we expect our model to not only remove the spatial divergence but also proceed with temporal alignment.
Note that, instead of stacking multiple losses to do so, we adhere to the above two losses while achieving the idea by developing a temporal-static subtraction module.
Specifically, we design a self-attention-based model structure to simultaneously produce a static and a temporal feature representation from input sequences.
Meanwhile, we remove spatial domain divergence by subtracting the static feature representation from the temporal one, followed by eliminating temporal domain divergence based on the subtraction results.
It is worth noting that our temporal-static subtraction module has a \textit{permutation-invariant} nature, which is theoretically proved.
Such permutation-invariance guarantees that the spatial domain divergence can be effectively removed by the simple subtraction. 
As a result, we present \emph{MetaTrans}, a frustratingly easy UVDA method that achieves good adaptation performance with only two losses.

\section{Related Work}
UVDA is an under-explored problem, and a few methods have attempted UVDA. 
Chen \textit{et al.}~\cite{chen2019temporal} proposed a temporal attentive adversarial adaptation network for frame- and video-level alignment.
Instead, clip-level alignment, region-level alignment, and pixel correlation discrepancy alignment are explored in \cite{choi2020shuffle}, \cite{hu2023dual}, and \cite{xu2022aligning}, respectively.
Wu \textit{et al.}~\cite{wu2022dynamic} further proposed mixing different levels of feature alignment.
The attention mechanism is also widely used in UVDA.
In \cite{da2022unsupervised}, the authors directly applied an attention module to learn the domain-invariant features.
The temporal co-attention network \cite{pan2020adversarial} adopted cross-domain attention on common frames for temporal alignment. 
In \cite{luo2020adversarial}, the authors proposed modeling the cross-domain correlations using a bipartite graph network topology.
The work \cite{broome2023recur} studied the cross-domain robustness for both convolution and attention models.

Some studies emphasize the background.
Sahoo \textit{et al.}~\cite{sahoo2021contrast} developed an end-to-end temporal contrastive learning framework with background mixing and target pseudo-labels. 
In contrast, Lee \textit{et al.}~\cite{lee2024glad} proposed to learn temporal order sensitive representations and background invariant representations to remove background shifts.
Other works improve the classifier to boost the adaptation performance.
Chen \textit{et al.}~\cite{chen2022multi} learned a multi-head domain classifier for multi-level temporal attentive features to achieve better alignment, and \cite{da2022dual} exploited the combination of cross-entropy and contrastive losses to form a two-headed target classifier. 
In \cite{reddy2024unsupervised}, the authors apply self-training on masked target data and use the video student model and the image teacher model together for UVDA.

Unlike the methods discussed above that focus on the temporal alignment in UVDA, recent studies start to handle both the spatial and temporal divergence.
Wei \textit{et al.}~\cite{wei2023unsupervised} proposed a general framework that handles UVDA from a disentanglement perspective.
Lin \textit{et al.}~\cite{lin2024diversifying} explicitly model spatial-temporal dependencies between video contents at multiple space-time scales using a clustering-like process.
Recently, HCT \cite{lin2024human} decouples human motion from the environment with a human encoder and a context encoder specifically for human action recognition tasks.

There are also works using multi-modality data for UVDA. 
Most of them use flow as an auxiliary input, e.g. \cite{munro2020multi,song2021spatio,kim2021learning,yang2022interact}. 
In addition to flow, some methods also explored other auxiliary inputs, e.g., audios \cite{zhang2022audio}, web images \cite{lin2022cycda}, and wild data \cite{yin2022mix}.
In recent work \cite{planamente2024relative}, the authors combined RGB features, flows, and audios for UVDA.
In this paper, we only use RGB features, and we take the extension to multi-modality as a promising future work direction. 

Instead of working on UVDA, some work deals with UVDA variants, i.e. open-set UVDA and source-free UVDA.
For open-set UVDA, the key is to reduce the adaptation effect of unknown samples.
In \cite{chen2021conditional}, the authors used the boost strategy to align the confident samples.
Wang \textit{et al.}~\cite{wang2021dual} separated known and unknown categories using a novel dual-metric discriminator.
Similar to \cite{sahoo2021contrast} for UVDA, Zara \textit{et al.}~\cite{zara2023simplifying} proposed a contrastive learning framework that learns discriminative and well-clustered features for open-set UVDA.
Recent work \cite{zara2023autolabel} takes advantage of the pre-trained language and vision model (CLIP) that automatically generates target-private class names.

For source-free UVDA, the difficulty lies in the lack of source data during the adaptation stage.
Dasgupta \textit{et al.}~\cite{dasgupta2025source} utilized pseudo-labels and recursively corrected the pseudo-labels to fine-tune the source pre-trained model.
Xu \textit{et al.}~\cite{xu2022source} extended the idea of \cite{pan2020adversarial} to source-free UVDA, while \cite{huang2022relative} combined the idea of \cite{chen2019temporal} with multi-modal inputs. 
In \cite{li2023source}, the authors proposed using spatial and temporal augmentations in target videos to update the pre-trained source model.
These UVDA variants are not our focus in this work, but we still discuss them for a comprehensive UVDA related work review. 

\section{The Proposed Method}
\subsection{Problem Setting}
We consider the typical UVDA setting with one source domain $\mathcal{S}$ and one target domain $\mathcal{T}$. 
The source $\mathcal{S}$ has sufficient labeled data, i.e., $\{(\mathbf{X}_i^{\mathcal{S}},y_i^{\mathcal{S}})\}_{i=1}^{N_\mathcal{S}}$.
The target $\mathcal{T}$ has only unlabeled data, $\{\mathbf{X}_i^{\mathcal{T}}\}_{i=1}^{N_\mathcal{T}}$. 
For a domain $\mathcal{D} \in \{\mathcal{S},\mathcal{T}\}$, $\mathbf{X}_i^{\mathcal{D}}$ has $T$ frames $\{\mathbf{x}_{i\_1}^{\mathcal{D}},...,\mathbf{x}_{i\_T}^{\mathcal{D}}\}$ in total, where $\mathbf{x}_{i\_t}^{\mathcal{D}}$ is the $t$-\emph{th} image frame.
We also denote the total amount of data $N = N^\mathcal{S} + N^\mathcal{T}$. 
The domains $\mathcal{S}$ and $\mathcal{T}$ have different data distributions but share the same label space. 
The objective is to utilize both $\{(\mathbf{X}_i^{\mathcal{S}},y_i^{\mathcal{S}})\}_{i=1}^{N_\mathcal{S}}$ and $\{\mathbf{X}_i^{\mathcal{T}}\}_{i=1}^{N_\mathcal{T}}$ to train a good prediction model for the domain $\mathcal{T}$.

\subsection{Streamlined Objective Function}
In this paper, we aim to develop a UVDA method that achieves good adaptation performance while minimizing the number of training runs.
Inspired by the theory of image-based UDA study \cite{ben2006analysis}, i.e., explicit minimization of domain divergence and source risk results in good domain adaptation representations, we enforce the learning objective of our model to incorporate only two fundamental losses: domain divergence minimization loss and source task supervision loss. 
These two losses are basically indispensable in every UDA or UVDA method, where the former is the primary mechanism of reducing domain gaps, while the latter is for the final prediction.

We implement the loss of domain divergence minimization using the domain adversarial idea \cite{wei2023unsupervised}.
Specifically, we construct a domain classifier that is tasked with discerning whether the data originate from $\mathcal{S}$ or $\mathcal{T}$. 
This gives the following domain divergence minimization loss:
\begin{equation} \label{adv_loss}
\mathcal{L}_{adv} = \mathbb{E}_i [\mathbb{E}_t [\mathcal{L} (C_{d}(\mathcal{M}(\mathbf{X}^\mathcal{D}_{i}),d_i )]],
\end{equation}
where $\mathbb{E}[\cdot]$ is the expectation operator, $\mathcal{M}$ is our model, $\mathcal{L}$ is the cross-entropy loss, $C_{d}$ is the domain classifier and $d_i$ is the domain label. 

The loss of task supervision applied to the source-labeled data is essential as it facilitates the training of the prediction model. 
Existing work \cite{chen2022multi} has highlighted the advantages of adding target pseudo-labeled data in task supervision loss.
Thus, we also consider both.
Specifically, in the training phase, we utilize the prediction network from the preceding epoch to generate target pseudo-labels.
To ensure the reliability of these labels, we initially train the prediction network solely on the source supervision for several epochs. 
In sum, the final task-specific supervision loss is as follows.
\begin{equation} \label{sup_loss}
\mathcal{L}_{\text{cls}} =  \mathbb{E}_{i}[ \mathcal{L} (\mathcal{M}(\mathbf{X}^\mathcal{D}_{i}), y_i^\mathcal{D}) ] ,
\end{equation}
where ${y}_i^\mathcal{D}$ is the label (pseudo-label) of $\mathbf{X}_{i}^\mathcal{S}$ ($\mathbf{X}_{i}^\mathcal{T}$). 
The final objective function is the combination of the domain divergence minimization and the task supervision losses:
\begin{equation} \label{overall_loss}
\mathcal{L}^\prime =  \mathcal{L}_{\text{cls}} + \lambda_1 \mathcal{L}_{\text{adv}} ,
\end{equation}
where $\lambda_1$ is the balance weight.
With Eq. (\ref{overall_loss}), we only need to search for the optimal value of $\lambda_1$, which greatly minimizes the number of training runs.

\subsection{Model Architecture}
Using only the learning objective of Eq. (\ref{overall_loss}) is insufficient to achieve state-of-the-art UVDA performance, despite the significant reduction in the number of training runs.
The next step is to explore how to design the structure of the model $\mathcal{M}$ to achieve advanced adaptation ideas while not introducing extra losses.
State-of-the-art methods have shown the superiority of separately handling the spatial and temporal divergence, e.g. \emph{TranSVAE} \cite{wei2023unsupervised} disentangles input video sequences as static and dynamic latent factors using a VAE-based model, and then uses the dynamic latent factors for the UVDA task.
However, effective disentanglement of static and dynamic latent factors requires employing up to 7 losses. 
\emph{TranSVAE}, when equipped solely with task supervision and adversarial losses, yields only marginal improvement (1.58\%) over the source-only baseline, considerably short of optimal results.
Our aim is to design the architecture of $\mathcal{M}$ so that spatial and temporal divergences are handled separately at the model structural level, while retaining Eq. (\ref{overall_loss}) as the learning objective.

Precisely, we develop a temporal-static subtraction module that simultaneously generates a temporal feature representation and a static feature representation.
The spatial domain divergence is removed by simply subtracting the static feature representation from the temporal one.
The losses, as outlined in Eq. (\ref{overall_loss}), are acted on the subtraction results, further reducing the temporal divergence on the one hand and outputting the final prediction model on the other.
\begin{figure*}[t]
\begin{center}
\includegraphics[width=1.0\textwidth]{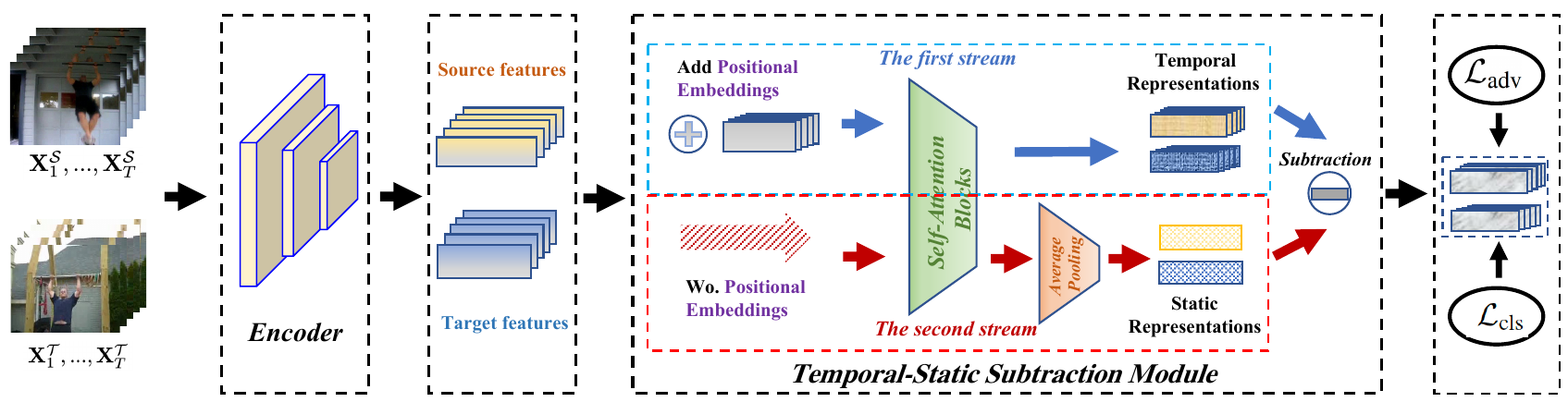}
\end{center}
\caption{\textit{MetaTrans} overview. The input videos are fed into an encoder to extract visual features, followed by a temporal-static subtraction module to learn a static representation and a temporal representation from the visual features without and with positional embeddings, respectively. A latent temporal embedding is obtained by subtracting the static features from temporal ones. }
\label{fig.framework}
\end{figure*}

Note that the crux of the temporal-static subtraction module lies in generating good static and temporal feature representations, with particular emphasis on the former. 
Intuition with respect to a good static feature representation suggests that it should encapsulate global information and exhibit consistency across different video sequences.
In particular, when considering a sequence, if we were to randomly shuffle the order of frames within this sequence, the static feature representation extracted from both the shuffled and original sequences should be equal or exhibit minimal discrepancy.
Ideally, this condition should hold for any permutation of frame order within the sequence.
This motivates us to develop a temporal permutation-invariant sub-module to learn the static features.
To achieve this, we first introduce the definition of temporal permutation invariance.
\begin{definition}
Given $\mathbf{X} \in \mathbb{R} ^ {d \times T}$ where $T$ is the temporal dimension and $d$ is the number of features and denoted $\pi$ as a permutation of $T$ elements, we define a transformation $\mathcal{T}_{\pi} : \mathbb{R}^{d \times T} \to \mathbb{R}^{d \times T}$ a temporal permutation if ${\mathcal{T}}_{\pi} (\mathbf{X}) = {\mathbf{X}} {\mathbf{P}_{\pi}}$  where $\mathbf{P}_{\pi} \in \mathbb{R} ^ {T \times T}$ denotes the permutation matrix associated with $\pi$ defined as $\mathbf{P}_{\pi} = [\mathbf{e}_{\pi(1)}, \mathbf{e}_{\pi(2)}, ..., \mathbf{e}_{\pi(T}]$ and $\mathbf{e}_{\pi(i)}$ is a one-hot vector of length $T$ with its $i$-\emph{th} element being 1.
An operator $\mathcal{O} : \mathbb{R}^{d \times T} \to \mathbb{R}^{d \times T}$ is temporally permutation invariant if $\mathcal{O}(\mathcal{T}_{\pi} (\mathbf{X})) = \mathcal{O}(\mathbf{X})$ for any $\mathbf{X}$ and any temporal permutation ${\mathcal{T}}_{\pi}$.
\end{definition}

With the above definition, we devise an attention-based model architecture \cite{vaswani2017attention} for $\mathcal{M}$. 
Our $\mathcal{M}$ employs two streams $\mathcal{M}_1$ and $\mathcal{M}_2$: $\mathcal{M}_1$ utilizes sequences with positional embeddings to learn the temporal representation, while $\mathcal{M}_2$ uses sequences without positional embeddings to learn the static representation.
In particular, for the first stream, video sequences initially pass through a positional embedding block to incorporate positional information. 
Subsequently, they traverse the attention blocks to reveal temporal information within the data. 
In contrast, for the second stream, video sequences directly undergo attention blocks, followed by an average operation to extract the static information embedded in the data.
The framework overview of $\mathcal{M}$ is shown in Figure \ref{fig.framework}.
The crucial insight lies in the temporal \emph{permutation invariant} nature of $\mathcal{M}_2$, as shown in Theorem \ref{Theorem1}. 
\begin{theorem} \label{Theorem1}
Consider $\mathcal{M}_2$ composed of: (i) multi-head self-attention without positional embeddings, (ii) residual connections, (iii) layer normalization performed per feature dimension, (iv) a feature-wise feed-forward network with shared parameters across time, and (v) an average operator. 
Specifically, letting $\mathbf{X}$ be the input sequence and $\mathcal{T}_{\pi}$ be any temporal permutation, we have: $\mathcal{M}_2(\mathcal{T}_{\pi}(\mathbf{X})) = \mathcal{M}_2(\mathbf{X})$.
\end{theorem}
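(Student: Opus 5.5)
The plan is to show that every component (i)--(iv) is temporally permutation \emph{equivariant}: for each such map $f:\mathbb{R}^{d\times T}\to\mathbb{R}^{d\times T}$ we have $f(\mathbf{X}\mathbf{P}_\pi)=f(\mathbf{X})\mathbf{P}_\pi$. Then the average operator (v) converts equivariance into invariance. Since equivariance is preserved under composition, the stack of attention blocks is equivariant. Writing $\mathcal{M}_2(\mathbf{X})=\frac{1}{T}F(\mathbf{X})\mathbf{1}_T$, with $F$ the composite of the blocks, we get
\[
\mathcal{M}_2(\mathbf{X}\mathbf{P}_\pi)=\tfrac{1}{T}F(\mathbf{X})\mathbf{P}_\pi\mathbf{1}_T=\tfrac{1}{T}F(\mathbf{X})\mathbf{1}_T=\mathcal{M}_2(\mathbf{X}),
\]
because $\mathbf{P}_\pi\mathbf{1}_T=\mathbf{1}_T$. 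If the output must live in $\mathbb{R}^{d\times T}$, as in the Definition, the average can be broadcast back as $\frac{1}{T}F(\mathbf{X})\mathbf{1}_T\mathbf{1}_T^\top$. This changes nothing in the argument.

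The first and main step is self-attention without positional embeddings. For a single head, $Q=W_Q\mathbf{X}$, $K=W_K\mathbf{X}$ and $V=W_V\mathbf{X}$. Under $\mathbf{X}\mapsto\mathbf{X}\mathbf{P}$, each of these maps to itself times $\mathbf{P}$. The score matrix $K^\top Q/\sqrt{d_k}$ therefore becomes $\mathbf{P}^\top(K^\top Q)\mathbf{P}$, which is a simultaneous permutation of its rows and columns.

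The key fact is that the column-wise softmax commutes with this conjugation: $\mathrm{softmax}(\mathbf{P}^\top A\mathbf{P})=\mathbf{P}^\top\mathrm{softmax}(A)\mathbf{P}$. This holds because the softmax acts entrywise via $\exp$ and then normalizes each column by its sum, and permuting the entries within a column does not change that sum. The head output is then
\[
V\mathbf{P}\,\mathbf{P}^\top\mathrm{softmax}(A)\mathbf{P}=V\,\mathrm{softmax}(A)\,\mathbf{P},
\]
using $\mathbf{P}\mathbf{P}^\top=I$. Multi-head attention concatenates the heads along the feature axis and applies $W_O$ on the left, so it remains equivariant. I expect this step to be the only non-routine one, mainly because the row/column and transpose conventions must be tracked carefully.

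The remaining components are straightforward.

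\textbf{Residual connections.} If $g$ is equivariant, then $\mathbf{X}\mapsto\mathbf{X}+g(\mathbf{X})$ is equivariant by linearity of right-multiplication by $\mathbf{P}$.

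\textbf{Layer normalization.} LayerNorm normalizes each time step (column) over its features, with learned affine parameters shared across time. It therefore acts column-by-column with the same function, and any such map $\mathbf{X}=[\mathbf{x}_1,\dots,\mathbf{x}_T]\mapsto[h(\mathbf{x}_1),\dots,h(\mathbf{x}_T)]$ commutes with permuting columns. The statement's phrase ``per feature dimension'' should be read in this sense.

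\textbf{Feed-forward network.} The FFN is also applied column-wise with parameters shared across time, so the same argument gives equivariance.

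Composing these components gives equivariance of $F$, and the averaging step above then completes the proof.
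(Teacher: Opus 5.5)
Your proposal is correct and follows the paper's proof essentially step for step. You show equivariance of each head via $\mathrm{softmax}(\mathbf{P}^\top A\mathbf{P})=\mathbf{P}^\top\mathrm{softmax}(A)\mathbf{P}$, of multi-head attention via concatenation along the feature axis, and of LayerNorm and the FFN as the same map applied at every time step. You then close under residual addition and composition, and get invariance from averaging because a permutation matrix fixes the all-ones vector.

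Your version is slightly cleaner than the paper's in two respects. You keep a single $d\times T$ convention throughout, whereas the paper switches to $T\times d$ with left multiplication after its first lemma. You also justify the softmax identity by the correct reason: permuting entries within a column leaves the normalizing sum unchanged. The paper instead appeals to orthogonality of $\mathbf{P}$, which on its own does not give the identity.
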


The full proof of Theorem \ref{Theorem1} is stated in section \ref{proof} of the appendix.
With Theorem \ref{Theorem1}, we can ensure that $\mathcal{M}_2$ is capable of generating an identical static feature representation, irrespective of any arbitrary change in frame order.
It is worth noting that appending an average operation at the end of $\mathcal{M}_1$ also yields another form of static feature representation, but this way lacks permutation invariance due to the incorporation of positional embeddings, which are inherently non-permutation invariant. 
This becomes evident because the addition of positional embeddings encodes positional information within the input sequence, rendering any shuffling operation meaningless.

Denoted $\mathbf{P}$ as positional embeddings, we remove the spatial domain divergence by subtracting the static latent feature representation from the temporal one:
\begin{equation} \label{key}
\mathbf{F}_i^{\mathcal{D}} = \mathcal{M}_1(\mathbf{X}_i^{\mathcal{D}} + \mathbf{P}_i^{\mathcal{D}}) - \mathcal{M}_2(\mathbf{X}_i^{\mathcal{D}} ).
\end{equation}
Note that the static features are repeated $T$ times to perform the subtraction.
This results in a latent temporal embedding $\mathbf{F}_i^{\mathcal{D}}$.
Subsequently, we incorporate Eq. (\ref{adv_loss}) for frame-level and video-level temporal alignments, where the former is in $\mathbf{F}_i^{\mathcal{S}}$ and $\mathbf{F}_i^{\mathcal{T}}$ and the latter is in the video-level features obtained from the frame aggregation network ($\mathcal{FAN}$) widely used in the existing UVDA studies \cite{chen2019temporal,wei2023unsupervised}. 
\begin{equation} \label{adv_loss_specific}
\mathcal{L}_{adv} = \mathbb{E}_i [\mathbb{E}_t [\mathcal{L} (C_{d}(\mathbf{F}^\mathcal{D}_{i},d_i ) + \mathcal{L} (C_{d}(\mathcal{FAN}(\mathbf{F}^\mathcal{D}_{i}),d_i )]].
\end{equation}
Additionally, we apply Eq. (\ref{sup_loss}) to train the prediction classifier on the video-level features.
\begin{equation} \label{sup_loss_specific}
\mathcal{L}_{\text{cls}} =  \mathbb{E}_{i}[ \mathcal{L} (\mathcal{FAN}(\mathbf{F}^\mathcal{D}_{i}), y_i^\mathcal{D}) ].
\end{equation}
To this end, we present our \emph{MetaTrans} model.

\subsection{Theoretical Analysis}
We further provide theoretical justifications for \emph{MetaTrans}. Firstly, we present the target error bound based on \cite{ben2006analysis}.
Let $G$ be a representation map and $h \in \mathcal{H}$ be a classifier. In \emph{MetaTrans}, it is
\begin{equation} \nonumber
G(\mathbf{X^\mathcal{D}}) = \mathcal{FAN}(\mathcal{M}_1(\mathbf{X}^\mathcal{D}+\mathbf{P}^\mathcal{D})-\mathcal{M}_2(\mathbf{X}^\mathcal{D})),
\label{eq:def_G}
\end{equation}
Let $\epsilon_D(h\circ G)$ denote the expected $0$-$1$ risk on domain $\mathcal{D}\in\{\mathcal{S},\mathcal{T}\}$.
We adopt the standard $\mathcal{H}\Delta\mathcal{H}$-divergence for domain adaptation.
The following bound specializes the classical domain adaptation bound in \cite{ben2006analysis} to our representation $G$.
\begin{theorem}
\label{thm:da_bound}
For any feature map $G$ and any $h\in\mathcal{H}$,
\begin{equation} \nonumber
\epsilon_T(h\circ G) \le \epsilon_S(h\circ G) +
\frac{1}{2}d_{\mathcal{H}\Delta\mathcal{H}}({X}^S_{G},\; {X}^T_{G})
+
\lambda^*(G),
\label{eq:da_bound}
\end{equation} 
where ${X}_G^D$ is the marginal domain distribution with respect to $G$, and
$\lambda^*(G) := \min_{h\in\mathcal{H}}\left(\epsilon_S(h\circ G)+\epsilon_T(h\circ G)\right)$.
\end{theorem}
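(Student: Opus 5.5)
The plan is to reduce Theorem~\ref{thm:da_bound} to the classical argument of \cite{ben2006analysis}, applied in the representation space induced by $G$. Fix $G$ and regard $X^S_G$ and $X^T_G$ as the pushforward distributions of source and target videos under $G$. Labels are induced through $G$ via the conditional label distribution given the representation. Then $\epsilon_D(h\circ G)$ is just the risk of $h$ on the induced domain $X^D_G$. Write $\epsilon_D(h,h') := \mathbb{E}_{z\sim X^D_G}[\mathbf{1}\{h(z)\neq h'(z)\}]$ for the disagreement between two hypotheses. Nothing specific to $\mathcal{M}_1$, $\mathcal{M}_2$ or $\mathcal{FAN}$ is needed; only measurability of $G$ matters. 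This is why the statement holds "for any feature map $G$".

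The key steps, in order, are as follows.
\begin{enumerate}
\item[(1)] Let $h^*\in\mathcal{H}$ attain $\lambda^*(G)$. If the minimum is not attained, take a near-minimizer and pass to the limit.
\item[(2)] The $0$-$1$ disagreement satisfies the triangle inequality. Hence
\[
\epsilon_T(h\circ G)\le \epsilon_T(h^*\circ G)+\epsilon_T(h,h^*)
\]
and
\[
\epsilon_S(h,h^*)\le \epsilon_S(h\circ G)+\epsilon_S(h^*\circ G).
\]
\item[(3)] Relate the target disagreement to the source disagreement:
\[
|\epsilon_T(h,h^*)-\epsilon_S(h,h^*)|\le \sup_{h_1,h_2\in\mathcal{H}}|\epsilon_T(h_1,h_2)-\epsilon_S(h_1,h_2)| = \tfrac12 d_{\mathcal{H}\Delta\mathcal{H}}(X^S_G,X^T_G).
\]
The equality uses the standard definition $d_{\mathcal{H}\Delta\mathcal{H}}=2\sup_{A\in\mathcal{A}_{\mathcal{H}\Delta\mathcal{H}}}|\Pr_S(A)-\Pr_T(A)|$. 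The sets $A$ here are the disagreement regions $\{z: h_1(z)\neq h_2(z)\}$.
\item[(4)] Chain the three inequalities:
\[
\epsilon_T(h\circ G)\le \epsilon_T(h^*\circ G)+\epsilon_S(h\circ G)+\epsilon_S(h^*\circ G)+\tfrac12 d_{\mathcal{H}\Delta\mathcal{H}}.
\]
Then identify $\epsilon_S(h^*\circ G)+\epsilon_T(h^*\circ G)=\lambda^*(G)$, which gives the claimed bound.
\end{enumerate}

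The main obstacle is step~(2) for the true-label terms rather than for disagreements between hypotheses. It requires a triangle inequality of the form $\epsilon_T(h)\le\epsilon_T(h^*)+\epsilon_T(h,h^*)$ when labels may be stochastic given $z=G(\mathbf{X})$.

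I would handle this pointwise. For each sample, $\mathbf{1}\{h\neq y\}\le \mathbf{1}\{h^*\neq y\}+\mathbf{1}\{h\neq h^*\}$. Taking expectations over the joint distribution of $(G(\mathbf{X}),y)$ then avoids assuming a deterministic labeling function on the representation space.

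A secondary care point is normalization. The factor $\tfrac12$ must match the convention for $d_{\mathcal{H}\Delta\mathcal{H}}$. I would state that convention explicitly so that the supremum over $\mathcal{H}\Delta\mathcal{H}$ sets equals half the divergence.
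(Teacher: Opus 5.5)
Your proposal is correct and follows the paper's route. The paper simply says the bound follows by applying the standard $\mathcal{H}\Delta\mathcal{H}$ adaptation bound of Ben-David et al.\ to the induced distributions $X^S_G, X^T_G$, and your steps (1)--(4) are exactly that standard proof carried out on the pushforward distributions, with the pointwise triangle inequality correctly covering the possibly stochastic labels induced by a non-injective $G$.
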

Theorem \ref{thm:da_bound} can be easily proved by applying the standard $\mathcal{H}\Delta\mathcal{H}$ adaptation bound to the induced distributions over $G$.
The \emph{MetaTrans} learning objective in Eq.~(3) directly targets the first two terms in this error bound: $\mathcal{L}_{cls}$ reduces the source risk and $\mathcal{L}_{adv}$ reduces the domain discrepancy in the representation space, which theoretically ensures that \emph{MetaTrans} provides a good feature representation for UVDA.

We then theoretically analyze why the temporal-static subtraction module in \emph{MetaTrans} benefits the UVDA performance.
We consider the factorized modeling of a video sequence where the per-frame feature follows an additive decomposition,
\begin{equation}
\mathbf{z}_t = \mathbf{s} + \mathbf{u}_t,\quad t=1,\ldots,T,
\label{eq:additive}
\end{equation}
where $\mathbf{s}\in\mathbb{R}^d$ is a static factor (background, style, camera) and $\mathbf{u}_t$ captures the dynamics.
We then propose the following theorem. 
\begin{theorem}
\label{theorem:w1_bound}
Let $Z_t(\mathbf{X}):=\mathcal{M}_1(\mathbf{X}+\mathbf{P})_t$ and $F_t(\mathbf{X})=Z_t(\mathbf{X})-\mathcal{M}_2(\mathbf{X})$.
Assume that there is a static component $\mathbf{s}$ such that the static estimation error
$e(\mathbf{X}):=\mathcal{M}_2(\mathbf{X})-\mathbf{s}$ satisfies $\mathbb{E}_{\mathbf{X}\sim\mathcal{S}}\|e(\mathbf{X})\|<\infty$ and
$\mathbb{E}_{\mathbf{X}\sim\mathcal{T}}\|e(\mathbf{X})\|<\infty$. Define the ``ideal'' residual
$\tilde{F}_t(\mathbf{X}):=Z_t(\mathbf{X})-\mathbf{s}$, so that $F_t(\mathbf{X})=\tilde{F}_t(\mathbf{X})-e(\mathbf{X})$.
Let $P_\mathcal{S}^F,P_\mathcal{T}^F$ be the distributions of $F_t(\mathbf{X})$ under $\mathcal{S}$ and $\mathcal{T}$,
and let $P_\mathcal{S}^{\tilde{F}},P_\mathcal{T}^{\tilde{F}}$ be the corresponding distributions of $\tilde{F}_t(\mathbf{X})$.
Denote the Wasserstein-1 distance as $W_1$, we then have:
\begin{equation} \nonumber
\begin{aligned}
W_1(P_\mathcal{S}^F,P_\mathcal{T}^F)
\le
&W_1(P_\mathcal{S}^{\tilde{F}},P_\mathcal{T}^{\tilde{F}}) \\
&+
\mathbb{E}_{\mathbf{X}\sim\mathcal{S}}\|e(\mathbf{X})\|
+
\mathbb{E}_{\mathbf{X}\sim\mathcal{T}}\|e(\mathbf{X})\|.
\label{eq:w1_bound_noab_fullyself}
\end{aligned}
\end{equation}
\end{theorem}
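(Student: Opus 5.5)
We will argue by the triangle inequality for $W_1$, together with a coupling bound that controls the Wasserstein distance between the law of a random vector and the law of a perturbed copy of it. The key observation is that $F_t(\mathbf{X})$ and $\tilde{F}_t(\mathbf{X})$ are defined on the same probability space, namely the draw $\mathbf{X}$ from a given domain. Hence the map $\mathbf{X}\mapsto(F_t(\mathbf{X}),\tilde{F}_t(\mathbf{X}))$ pushes the domain distribution forward to a valid coupling of $P^F_\mathcal{D}$ and $P^{\tilde F}_\mathcal{D}$ for each $\mathcal{D}\in\{\mathcal{S},\mathcal{T}\}$.

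First we establish, for each domain $\mathcal{D}$, the bound
\begin{equation} \nonumber
W_1(P^F_\mathcal{D},P^{\tilde F}_\mathcal{D}) \le \mathbb{E}_{\mathbf{X}\sim\mathcal{D}}\|F_t(\mathbf{X})-\tilde F_t(\mathbf{X})\| = \mathbb{E}_{\mathbf{X}\sim\mathcal{D}}\|e(\mathbf{X})\|.
\end{equation}
This holds because $W_1$ is the infimum over couplings of the expected distance, and the pushforward coupling just described is one admissible choice. The identity $F_t-\tilde F_t=-e$ follows directly from the definitions in the statement.

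Second, we apply the triangle inequality for $W_1$ on the space of probability measures with finite first moment:
\begin{equation} \nonumber
W_1(P^F_\mathcal{S},P^F_\mathcal{T}) \le W_1(P^F_\mathcal{S},P^{\tilde F}_\mathcal{S}) + W_1(P^{\tilde F}_\mathcal{S},P^{\tilde F}_\mathcal{T}) + W_1(P^{\tilde F}_\mathcal{T},P^F_\mathcal{T}).
\end{equation}
Substituting the two bounds from the first step gives the claimed inequality.

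The only delicate point is the well-definedness of $W_1$ and the validity of the triangle inequality, which requires finite first moments. The statement assumes integrability of $\|e\|$ but not of $\tilde F_t$. We will treat this as follows.
\begin{itemize}
\item If $W_1(P^{\tilde F}_\mathcal{S},P^{\tilde F}_\mathcal{T})=\infty$, the inequality holds trivially.
\item Otherwise, $W_1$ is still a metric on the extended reals. This can be checked via the gluing lemma, which composes couplings and applies the pointwise triangle inequality, without needing finite moments.
\item Finite first moments of $F$ versus $\tilde F$ transfer from one to the other because $\mathbb{E}\|e\|<\infty$.
\end{itemize}

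We will note this gluing-lemma argument briefly. Also, if the distributions are interpreted as mixtures over $t$, the same argument applies by coupling through the joint draw $(\mathbf{X},t)$.
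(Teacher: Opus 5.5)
Your proof is correct. It has the same skeleton as the paper's: the triangle-inequality chain through $P_{\mathcal{S}}^{\tilde F}$ and $P_{\mathcal{T}}^{\tilde F}$, followed by the per-domain bound $W_1(P^F_{\mathcal{D}},P^{\tilde F}_{\mathcal{D}})\le\mathbb{E}_{\mathbf{X}\sim\mathcal{D}}\|e(\mathbf{X})\|$. The difference is in how you obtain that per-domain bound.

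The paper works on the dual side. It invokes Kantorovich--Rubinstein duality, bounds $\mathbb{E}[f(F_t)]-\mathbb{E}[f(\tilde F_t)]$ by $\mathbb{E}\|F_t-\tilde F_t\|$ for each $1$-Lipschitz $f$, and takes the supremum. You work on the primal side: the map $\mathbf{X}\mapsto(F_t(\mathbf{X}),\tilde F_t(\mathbf{X}))$ pushes $\mathcal{D}$ forward to an admissible coupling, so the bound follows directly from the infimum definition.

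These are two views of the same fact, but yours is shorter and needs no duality theorem. In particular, it does not need the finite-first-moment hypothesis that the paper's Fact 1 explicitly requires. The statement guarantees integrability only for $e$, not for $\tilde F_t$. So in the paper's argument the individual terms $\mathbb{E}[f(\tilde F_t)]$ could be infinite, even though the difference $f(F_t)-f(\tilde F_t)$ is integrable.

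Your gluing-lemma remark, which gives the triangle inequality for the extended-valued $W_1$, settles the one remaining technical point that the paper passes over silently. What the paper's dual route buys is mostly presentational: it phrases the bound through Lipschitz test functions, which parallels the adversarial-critic view of $W_1$.
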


\vspace{-0.3cm}
The proof of theorem \ref{theorem:w1_bound} is stated in the appendix. 
Theorem \ref{theorem:w1_bound} shows that the post-subtraction discrepancy between the source and target domains decomposes into (i) an \emph{ideal temporal discrepancy} term $W_1(P_S^{\tilde{F}},P_T^{\tilde{F}})$ that
captures what remains after perfectly removing static nuisance, plus (ii) the
static estimation errors of $\mathcal{M}_2$ on both domains. 
Consequently, Eq.~(\ref{key}) acts as a principled \emph{preconditioning} step for
domain adaptation: when $\mathcal{M}_2$ accurately estimates the static information, the additional discrepancy introduced by subtraction is
small, and the remaining alignment problem is dominated by the discrepancy of the ideal
dynamic residual $\tilde{F}$. 
This exactly aligns with our two-stream design: $\mathcal{M}_2$ removes spatial static factors, and the adversarial alignment in Eq.~(\ref{adv_loss_specific}) can focus on the residual temporal content propagated to $\mathcal{FAN}(\cdot)$, thereby reducing the discrepancy of the video-level representation used by the classifier.
\begin{theorem}
\label{theorem:small_bound}
Let a sequence $\mathbf{X}$ follow the factorized modeling $\mathbf{x}_t = \mathbf{s} + \mathbf{u}_t, \ t=1,\ldots,T$.
Conditioned on the static factor $\mathbf{s}$, the dynamic factors $\{u_t\}_{t=1}^T$ satisfy $\mathbb{E}[u_t | s]=0$, and have a finite second moment.
Let $\mathcal{M}_2$ be the static stream of MetaTrans, which is temporal permutation invariant. Further define $\bar{\mathbf{u}}:=\frac{1}{T}\sum_{t=1}^T \mathbf{u}_t$ and assume that each coordinate $u_{t,j}$ ($j=1,\ldots,d$) is sub-Gaussian with parameter $\sigma^2$ conditioned on $s$, i.e.,
\begin{equation} \nonumber
\mathbb{E}\big[\exp(\lambda u_{t,j})\mid \mathbf{s}\big]\le \exp\left(\frac{\sigma^2\lambda^2}{2}\right),\quad \forall \lambda\in\mathbb{R}.
\label{eq:subg_def}
\end{equation} 
Then for any $\delta\in(0,1)$, with probability at least $1-\delta$,
\begin{equation} \nonumber
\|\mathcal{M}_2(\mathbf{X})-s\|_2
\le
\varepsilon_{\mathrm{cal}}
+
L\sigma\sqrt{\frac{2d\log(2d/\delta)}{T}},
\label{eq:full_hp}
\end{equation}
where $\varepsilon_{\mathrm{cal}}$ is a small error value and $L$ is a constant.
\end{theorem}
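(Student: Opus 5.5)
The plan is to split the error into a deterministic ``calibration'' part and a stochastic concentration part, using the empirical mean of the frames as an intermediate quantity. Write $\bar{\mathbf{x}}:=\frac1T\sum_t \mathbf{x}_t=\mathbf{s}+\bar{\mathbf{u}}$. Then the triangle inequality gives
\begin{equation}\nonumber
\|\mathcal{M}_2(\mathbf{X})-\mathbf{s}\|_2 \le \|\mathcal{M}_2(\mathbf{X})-\mathcal{M}_2(\mathbf{s}\mathbf{1}^\top)\|_2+\|\mathcal{M}_2(\mathbf{s}\mathbf{1}^\top)-\mathbf{s}\|_2 .
\end{equation}
Here $\mathbf{s}\mathbf{1}^\top$ is the constant sequence, i.e., the input with zero dynamics.

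\textbf{The calibration term.} I define $\varepsilon_{\mathrm{cal}}:=\sup_{\mathbf{s}}\|\mathcal{M}_2(\mathbf{s}\mathbf{1}^\top)-\mathbf{s}\|_2$, or its value at the relevant $\mathbf{s}$. This is how well the static stream reproduces a purely static video. It is exactly the quantity the statement treats as ``a small error value'', so nothing needs to be proved about it beyond naming it.

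\textbf{The stability term.} I need $\mathcal{M}_2$ to be Lipschitz in a way that only sees the average of the dynamics. I would first use Theorem~\ref{Theorem1}: since $\mathcal{M}_2$ is temporally permutation invariant, it depends on $\mathbf{X}$ only through the multiset of frames. I would then assume, as the statement does via the constant $L$, a local Lipschitz property of the form
\begin{equation}\nonumber
\|\mathcal{M}_2(\mathbf{X})-\mathcal{M}_2(\mathbf{s}\mathbf{1}^\top)\|_2\le L\,\|\bar{\mathbf{x}}-\mathbf{s}\|_2=L\|\bar{\mathbf{u}}\|_2 .
\end{equation}
Justifying this reduction from all $T$ frames to their mean is the main obstacle. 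A generic Lipschitz bound on attention, LayerNorm and the FFN gives dependence on $\max_t\|\mathbf{u}_t\|$ or on $\frac1T\sum_t\|\mathbf{u}_t\|$, not on $\|\bar{\mathbf{u}}\|$. Those quantities do not shrink with $T$.

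My first attempt would be a first-order expansion of $\mathcal{M}_2$ around the constant sequence. At a constant input, attention weights are uniform and every position is identical, so the Jacobian with respect to each frame is the same matrix, scaled by $1/T$ after the average operator. The linear term is therefore $J\bar{\mathbf{u}}$ with $\|J\|\le L$, and higher-order terms are absorbed into $\varepsilon_{\mathrm{cal}}$. If that cannot be made rigorous, I would simply state the mean-Lipschitz condition as the modeling assumption encoded by $L$.

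\textbf{Concentration.} It remains to bound $\|\bar{\mathbf{u}}\|_2$ conditionally on $\mathbf{s}$. I would assume the $\mathbf{u}_t$ are conditionally independent given $\mathbf{s}$, which the sub-Gaussian statement implicitly requires. Each coordinate $\bar u_j$ is then mean zero and sub-Gaussian with parameter $\sigma^2/T$. Hoeffding gives
\begin{equation}\nonumber
P\big(|\bar u_j|>\tau\mid\mathbf{s}\big)\le 2\exp\!\left(-\frac{T\tau^2}{2\sigma^2}\right).
\end{equation}
A union bound over the $d$ coordinates with $\tau=\sigma\sqrt{2\log(2d/\delta)/T}$ yields $\|\bar{\mathbf{u}}\|_\infty\le\tau$ with probability at least $1-\delta$. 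Hence $\|\bar{\mathbf{u}}\|_2\le\sqrt d\,\tau=\sigma\sqrt{2d\log(2d/\delta)/T}$.

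Because this holds conditionally on every $\mathbf{s}$, it also holds unconditionally. Combining the three pieces gives the stated bound.
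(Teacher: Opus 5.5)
Your proposal is correct and follows the paper's proof essentially step for step. The paper also inserts $\mathcal{M}_2(\mathbf{s}\mathbf{1}^\top)$ via the triangle inequality, and it simply imposes the calibration bound and the mean-stability bound $\|\mathcal{M}_2(\mathbf{X})-\mathcal{M}_2(\mathbf{s}\mathbf{1}^\top)\|_2\le L\|\bar{\mathbf{u}}\|_2$ as extra conditions (C1) and (C2), which are only ``motivated'' by permutation invariance. It then runs the same Chernoff, union-bound and $\ell_\infty$-to-$\ell_2$ argument, and its product of conditional MGFs silently uses the conditional independence you state openly.

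Your fallback of assuming the mean-Lipschitz property outright is therefore exactly the paper's route, and it is the one you need. The first-order-expansion justification would not work: for a per-frame nonlinearity $\phi$, the second-order part of $\frac1T\sum_t\phi(\mathbf{s}+\mathbf{u}_t)-\phi(\mathbf{s})$ concentrates around a nonzero bias set by the covariance of $\mathbf{u}_t$. That bias neither vanishes with $T$ nor belongs to the constant-sequence calibration error.
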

\vspace{-0.2cm}
The proof of theorem \ref{theorem:small_bound} is stated in the appendix. 
Theorem \ref{theorem:small_bound} shows that the temporal permutation-invariant $\mathcal{M}_2$ yields a reliable estimator of static factors, with the estimation error decaying as $\mathcal{O}(\sqrt{1/T})$.
Combined with Theorem \ref{theorem:w1_bound}, they provide a unified mechanistic justification for \emph{MetaTrans}: the permutation-invariant design of $\mathcal{M}_2$ yields a good modeling of static factors, which tightens the post-subtraction domain gap bound and facilitates residual representations for downstream alignment and temporal reasoning.

\vspace{-0.2cm}
\section{Experimental Study}
\subsection{Experimental Configurations}
\paragraph{Datasets.}
Experiments are carried out on two popular UVDA benchmark datasets, namely the UCF-HMDB dataset \cite{chen2019temporal} and the Epic-Kitchens dataset \cite{damen2018scaling}.
The UCF-HMDB dataset consists of two UVDA tasks: $\mathbf{U}\to\mathbf{H}$ and $\mathbf{H}\to\mathbf{U}$.
The video samples are collected by amalgamating relevant and overlapping action classes from the UCF$_{101}$ dataset \cite{soomro2012ucf101} and the HMDB$_{51}$ dataset \cite{kuehne2011hmdb}. 
The original Epic-Kitchens dataset is a rigorous egocentric dataset featuring videos that document daily activities within kitchens. 
To fit the context of UVDA, \cite{munro2020multi} establishes three domains across the eight largest actions, $\mathbf{P08}$, $\mathbf{P01}$, and $\mathbf{P22}$ kitchens within the complete dataset. 
This results in 6 cross-domain tasks.
\vspace{-0.3cm}

\paragraph{Implementation Details.}
Following \cite{wei2023unsupervised}, we use the I3D \cite{carreira2017quo} backbone to extract RGB features.
For a given video sequence, we sample 16 frames within clips by sliding a temporal window with a temporal stride of 1. 
Subsequently, we feed these sliding windows to the I3D model to extract features, yielding a 2,048-dimensional feature vector for each frame of the video.
For \textit{MetaTrans}, the self-attention blocks consist of 4 layers, and each layer is with 8-heads attention.
The standard positional embeddings stated in \cite{vaswani2017attention} are used, while other advanced positional embeddings can be easily applied.
For the frame aggregation network, we use the one provided by \cite{chen2019temporal}.
\input{Tables/UCF-HMDB_comparison_results}

\vspace{-0.1cm}
\textit{MetaTrans} is implemented using PyTorch. 
We use Adam optimizer with the weight decay of $1e^{-4}$, and set the learning rate and batch size to $1e^{-4}$ and 256, respectively. 
The number of epochs is set to 500 for all experiments.
Target pseudo labels are incorporated starting from epoch 100. 
For $\lambda_1$, we adhere to the common protocol in UVDA \cite{sahoo2021contrast,da2022dual,wei2023unsupervised}, performing a grid search on the validation set. 
All experiments are conducted with one NVIDIA A100 GPU.
\vspace{-0.5cm}

\paragraph{Competitors.}
We compared with multiple baselines.
Initially, we consider \textit{source-only} ($\mathcal{S}_{\text{only}}$) and \textit{supervised-target} ($\mathcal{T}_{\text{sup}}$) baselines, which exclusively utilize labeled source data and labeled target data, respectively. 
These two baselines serve as the lower and upper bounds for UVDA tasks.
We also include 5 popular image-based UDA methods, where temporal information is simply ignored.
They are DANN \cite{ganin2016domain}, JAN \cite{long2017deep}, ADDA \cite{tzeng2017adversarial}, AdaBN \cite{li2018adaptive}, and MCD \cite{saito2018maximum}. 
Lastly and most importantly, we compare several UVDA methods, including TA$^3$N \cite{chen2019temporal}, SAVA \cite{choi2020shuffle}, TCoN \cite{pan2020adversarial}, ABG \cite{luo2020adversarial}, MM-SADA \cite{munro2020multi}, CoMix \cite{sahoo2021contrast}, STCDA \cite{song2021spatio}, CO$^2$A \cite{da2022dual}, and MA$^2$L-TD \cite{chen2022multi}, MixDANN \cite{yin2022mix}, DFRA \cite{hu2023dual}, STDN \cite{lin2024diversifying}, TranSVAE \cite{wei2023unsupervised}, DALLA-V \cite{zara2023unreasonable}, STHC \cite {li2023source}, UNITE \cite{reddy2024unsupervised}, EXTERN \cite{xu2024leveraging}, HCT \cite{lin2024human}, Co-STAR \cite{dadashzadeh2025co}, C-RNA \cite{planamente2024relative}, MCT \cite{rothenberger2025meta} CleanAdapt \cite{dasgupta2025source} and TAViT \cite{yosry2025temporal}.
For all baselines, we use RGB features as the input modality.
Whenever possible, we directly compare the results by quoting the numbers reported in the corresponding papers. 
\vspace{-0.4cm}

\paragraph{Evaluation Metrics.}
We use prediction accuracy as the main metric for adaptation performance comparisons. 
In addition, we assess various UVDA methods while considering the number of training runs required to achieve optimal prediction accuracy.
As highlighted in Introduction, the number of training runs is closely related to the number of losses in the learning objective of a method.
Considering the indispensability of the task supervision loss in UVDA methods, we assign a constant weight of 1 to this term, while allowing tunable weights for other losses within the objective function across all methods. 
We employ identical weight for losses that serve similar learning objectives, such as adversarial loss on different feature levels.
Moreover, for fair and convenient comparison, we uniformly set the number of weight value candidates to 10 for all losses in different UVDA methods.
Consequently, given a UVDA method, we define the metric, \textit{R}elative \textit{G}ain per \textit{R}unning \textit{A}ttempt (\emph{RGRA}), which accounts for both the adaptation performance gain and the number of training runs required:
\begin{equation} \label{RGRA_exp} 
RGRA = \frac{A_{opt} - A_{\mathcal{S}_{only}}}{A_{\mathcal{T}_{sup}}-A_{\mathcal{S}_{only}}} * \frac{1}{10 \times (N_{loss}-1)}, 
\end{equation}
where, for a given UVDA task, $A_{opt}$ is the optimal accuracy attained by the method, $A_{\mathcal{S}{only}}$ is the result of $\mathcal{S}_{\text{only}}$, $A_{\mathcal{T}_{sup}}$ is for the result of $\mathcal{T}_{\text{sup}}$, and $N_{loss}$ is the number of losses employed in the method.
Here, we adopt the strategy where we search for one weight while fixing the others, as opposed to the greedy search strategy. 
Using greedy search results in a more expensive search, where the training runs grow exponentially with $N_{loss}$.
\input{Tables/epic-kitchens-comparison}

\vspace{-0.2cm}
\subsection{Experimental Results}
\paragraph{Adaptation Performance Comparison.}
We first compare the absolute adaptation performance among different baselines.
Table \ref{tab.ucf-hmdb} shows the comparison results on the UCF-HMDB dataset.
The best result among all the baselines is highlighted in bold and the runner-up is highlighted in bold and italic.
As can be seen, \textit{MetaTrans} outperforms all previous methods in terms of average adaptation performance.
In particular, \textit{MetaTrans} achieves average accuracy $95.4\%$, improving the best competitor \textit{UNITE} by $1.6\%$.
Note that \textit{MetaTrans} achieves an improvement $2.0\%$ over \emph{TranSVAE} that also handles the spatial and temporal divergence separately.
This shows the superiority of \textit{MetaTrans} on the UCF-HMDB benchmark, which attains the best adaptation performance with a simple learning objective.

Table \ref{tab.jester-and-kitchens} shows the results on the Epic-Kitchens benchmark. 
It can be seen that \textit{MetaTrans} is the runner-up among all the methods on the average performance.
It performs $1.6\%$ worse than TranSVAE.
However, given that TranSVAE relies on seven loss terms whereas \textit{MetaTrans} uses only two, \textit{MetaTrans} remains highly competitive.
All the comparison results in Tables \ref{tab.ucf-hmdb} and \ref{tab.jester-and-kitchens} show that \textit{MetaTrans} can achieve considerably good absolute adaptation performance, although it only uses the two base losses of Eqs. (\ref{adv_loss}) and (\ref{sup_loss}).
\begin{figure*}[t]
\begin{center}
\scalebox{0.84}{
\includegraphics[width=1\textwidth]{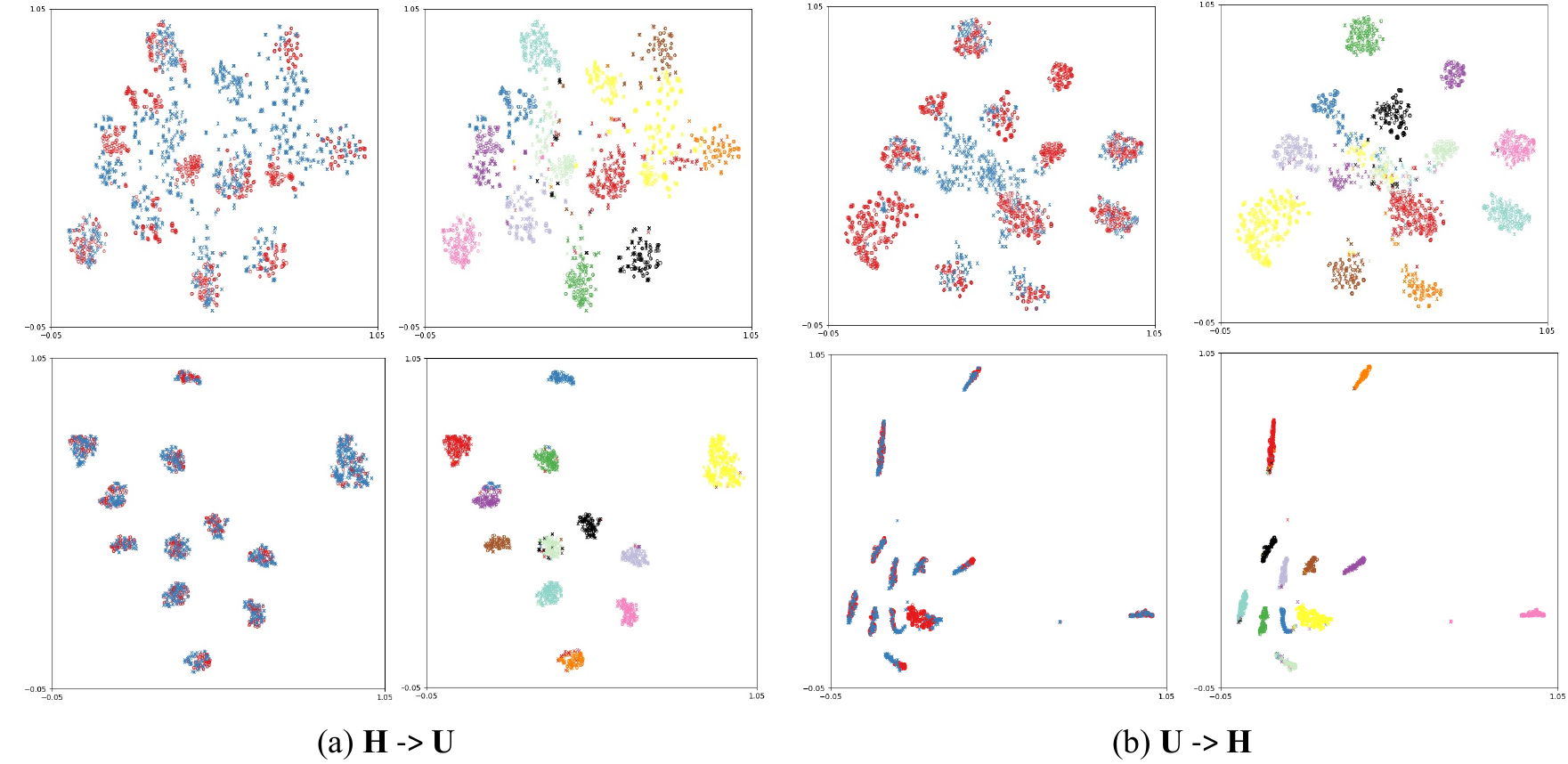}}
\end{center}
\caption{The t-SNE plots for class-wise (multi-color, the 2$^{nd}$ and 4$^{th}$ columns) and domain (red source \& blue target, the 1$^{st}$ and 3$^{rd}$ columns) features for $\mathcal{S}_{\text{only}}$ (the 1$^{st}$ row) and \emph{MetaTrans} (the 2$^{nd}$ row).}% on both $\mathbf{U} \to \mathbf{H}$ and $\mathbf{H} \to \mathbf{U}$ tasks. }
\label{fig.feature_vis}
\end{figure*}
\input{Tables/RGRA-comparison}

\vspace{-0.5cm}
\paragraph{Relative Gain per Running Attempts.}
\emph{RGRA} serves as a metric to evaluate the holistic performance of a UVDA method, considering both adaptation performance and the number of training runs necessary to achieve that performance.
Thus, we compare \emph{RGRA} using Eq. (\ref{RGRA_exp}) across different baselines.
The results on both UCF-HMDB and Epic-Kitchens datasets are shown in Table \ref{tab.rgra}.
From Table \ref{tab.rgra}, it is evident that \textit{MetaTrans} consistently outperforms all baselines, emerging as the winner across the board.
Particularly noteworthy is its significant superiority over all other methods in every adaptation task.
On average, \textit{MetaTrans} can achieve adaptation performance improvements of 6.02\% and 10.35\% in each run attempt.
Moreover, we observe that, regarding \emph{RGRA}, the recent \textit{TranSVAE} and \textit{HCT} are not the state-of-the-art any more due to their complex model design, and in contrast, earlier methods with fewer losses, for example STCDA, show better \emph{RGRA}. 
\input{Tables/Ablation_v2}
\vspace{-0.4cm}
\paragraph{Ablation Study.} 
\textit{MetaTrans} effectively addresses spatial and temporal domain disparities separately. 
The former is managed through the subtraction of temporal and static feature representations, while the latter is achieved via adversarial loss.
In this section, we validate the efficacy of each operation through ablation studies. 
We propose two \textit{MetaTrans} variants: \textit{MetaTrans}$\_wo\_sub$, which solely focuses on temporal alignment using adversarial loss without the subtraction operation, and \textit{MetaTrans}$\_wo\_adv$, which exclusively removes spatial domain divergence through the subtraction operation without employing adversarial loss. 
Comparison results of these variants with \textit{MetaTrans} on the UCF-HMDB dataset are presented in Table \ref{tab.divergence}, with the $\mathcal{S}_{\text{only}}$ included for reference. 
Both variants exhibit improvements over $\mathcal{S}_{\text{only}}$, highlighting the effectiveness of individual spatial and temporal divergence removal within \textit{MetaTrans}.
Furthermore, the integration of both, as in \textit{MetaTrans}, significantly enhances adaptation performance.

\vspace{-0.5cm}
\paragraph{Permutation Invariance Analysis.}
The key of \textit{MetaTrans} is its temporal permutation-invariant nature.
Herein, we explore two alternative model architectures for extracting static and temporal feature representations, and compare \textit{MetaTrans} with them.
The first alternative, a variant of \textit{MetaTrans} named \textit{MetaTrans}$\_fs\_pooling$, retains only $\mathcal{M}_1$ and adds an average pooling operation at the stream's end to derive the static feature representation. 
The second alternative involves utilizing a `Bi-LSTM' to replace the transformer structure in $\mathcal{M}_1$, followed by a pooling operation for static feature extraction, referred to as \textit{Bi-LSTM}$\_pooling$.
Neither of these alternatives is permutation-invariant. 
However, we apply the same adaptation idea as \textit{MetaTrans} to address spatial and temporal divergence.
We test on the UCF-HMDB dataset and show the comparison results in Table \ref{tab.Architecture}.
As can be seen, \textit{MetaTrans} consistently achieves notably higher accuracy than both alternatives. 
This superiority is primarily attributed to the permutation-invariant nature of \textit{MetaTrans}, which implicitly enforces the `static consistency', leading to superior static feature representation.

\paragraph{Feature Visualization.} 
We further take advantage of the t-SNE \cite{van2008visualizing} to visualize the final video-level features learned by \emph{MetaTrans}.
Additionally, we include the t-SNE feature visualization of $\mathcal{S}_{\text{only}}$ for comparison purposes.
Two sets of t-SNE figures, one using the class-wise label and another using the domain label, are shown in Figure \ref{fig.feature_vis}.
As can be seen from the t-SNE feature visualizations, \emph{MetaTrans} produces denser and more dispersed class clusters than the source-only baseline.
Within each class cluster, the source and target data are closely grouped and challenging to distinguish.
All these observations well interpret the positive adaptation performance gain of \emph{MetaTrans} over $\mathcal{S}_{\text{only}}$. 
\input{Tables/Permutation}

\section{Conclusion}
In this paper, we propose a frustratingly easy UVDA framework, \textit{MetaTrans} that uses two basic losses but embodies novel UVDA idea through subtle model architecture design. 
Precisely, we develop a permutation-invariant temporal-static subtraction module consisting of two streams, one for static estimation and another for temporal learning. 
The spatial domain divergence is reduced by subtracting static features from temporal ones. 
Temporal alignment is then achieved by minimizing adversarial loss.
Both theoretical and empirical analyses clearly verify the effectiveness of \textit{MetaTrans} on achieving positive adaptation performance. 

\section*{Impact Statement}
This paper provides a streamlined novel adaptation method for unsupervised video domain adaptation problem.
It can use cross-domain video data, which effectively helps reduce the annotation efforts in related video applications. 
Moreover, the streamlined design of the model enables it to achieve outstanding ROI value, supporting its practical utility in real-world applications.
Although the main empirical evaluation is on the video action recognition task, the model architecture and the concept proposed in this paper are also applicable to other video-related tasks, such as video semantic segmentation. 
More generally, the idea of removing the global information from a sequence using a transformer architecture sheds light on other related research, \textit{e.g.}, disentanglement. 
The negative impacts of this work are difficult to predict. 
However, as a deep model, our method shares some common pitfalls of the standard deep learning models, \textit{e.g.}, demand for powerful computing resources and vulnerability to adversarial attacks. 
\nocite{langley00}

\bibliography{metatrans}
\bibliographystyle{icml2026}
\newpage
\appendix
\onecolumn

\counterwithin{theorem}{section}
\counterwithin{lemma}{section}
\counterwithin{definition}{section}

\setcounter{definition}{0}
\setcounter{proposition}{0}
\setcounter{theorem}{0}
\setcounter{equation}{0}
\setcounter{table}{0}
\setcounter{figure}{0}
\section*{Appendix}
In this appendix, we supplement materials to support the findings and conclusions drawn in the main body of this paper.
Specifically, we show the proof of theorems, and additional experimental results, followed by more discussion on \emph{MetaTrans}.  

\section{Proof of Theorems} \label{proof}
\subsection{Proof of Temporal Permutation Invariant of $\mathcal{M}_2$}
\begin{theorem} \label{Theorem-apx-1}
Consider $\mathcal{M}_2$ composed of: (i) multi-head self-attention without positional embeddings, (ii) residual connections, (iii) layer normalization performed per feature dimension, (iv) a feature-wise feed-forward network with shared parameters across time, and (v) an average operator. 
Specifically, letting $\mathbf{X}$ be the input sequence and $\mathcal{T}_{\pi}$ be any temporal permutation, we have: $\mathcal{M}_2(\mathcal{T}_{\pi}(\mathbf{X})) = \mathcal{M}_2(\mathbf{X})$.
\end{theorem}
To prove Theorem \ref{Theorem-apx-1}, we need to prove several lemmas. 
We first introduce the definition of temporal permutation equivariant.
\begin{definition}
Given a video sequence $\mathbf{X} \in \mathbb{R} ^ {d \times T}$ where $T$ is the temporal dimension and $d$ is the number of features and denoted $\pi$ as a permutation of $T$ elements, we define a transformation $\mathcal{T}_{\pi} : \mathbb{R}^{d \times T} \to \mathbb{R}^{d \times T}$ a temporal permutation if ${\mathcal{T}}_{\pi} (\mathbf{X}) = {\mathbf{X}} {\mathbf{P}_{\pi}}$  where $\mathbf{P}_{\pi} \in \mathbb{R} ^ {T \times T}$ denotes the permutation matrix associated with $\pi$ defined as $\mathbf{P}_{\pi} = [\mathbf{e}_{\pi(1)}, \mathbf{e}_{\pi(2)}, ..., \mathbf{e}_{\pi(T)}]$ and $\mathbf{e}_{\pi(i)}$ is a one-hot vector of length $T$ with its $i$-\emph{th} element being 1.
An operator $\mathcal{O} : \mathbb{R}^{d \times T} \to \mathbb{R}^{d \times T}$ is temporally permutation equivariant if $\mathcal{O}(\mathcal{T}_{\pi} (\mathbf{X})) = \mathcal{T}_{\pi}(\mathcal{O}(\mathbf{X}))$ for any $\mathbf{X}$ and any temporal permutation ${\mathcal{T}}_{\pi}$.
\end{definition}

\begin{lemma} \label{lemma-apx-1}
A self-attention operator $\mathcal{SA}(\cdot)$ is permutation equivariant.
Specifically, letting $\mathbf{X}$ denote the input matrix and $\mathcal{T}_{\pi}$ denotes any temporal permutation, we have: $\mathcal{SA}(\mathcal{T}_{\pi}(\mathbf{X})) = \mathcal{T}_{\pi}(\mathcal{SA}(\mathbf{X}))$.
\end{lemma}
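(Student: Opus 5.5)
We write self-attention in the paper's column convention, where $\mathbf{X} \in \mathbb{R}^{d\times T}$ has one column per time step. A single head is
\[
\mathcal{SA}(\mathbf{X}) = \mathbf{W}_V \mathbf{X}\,\mathrm{softmax}\!\left(\frac{(\mathbf{W}_K\mathbf{X})^\top \mathbf{W}_Q \mathbf{X}}{\sqrt{d_k}}\right),
\]
where the softmax normalizes each column. The plan is to push $\mathbf{P}_\pi$ through each factor and collect the permutation matrices.

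The first step is to use two facts about $\mathbf{P}_\pi$: it is orthogonal, so $\mathbf{P}_\pi \mathbf{P}_\pi^\top = \mathbf{I}$, and the projections act on the left while the permutation acts on the right. Hence $\mathbf{W}_Q(\mathbf{X}\mathbf{P}_\pi) = (\mathbf{W}_Q\mathbf{X})\mathbf{P}_\pi$, and the same holds for $\mathbf{W}_K$ and $\mathbf{W}_V$. The score matrix therefore transforms as
\[
\mathbf{A}(\mathbf{X}\mathbf{P}_\pi) = \mathbf{P}_\pi^\top \mathbf{A}(\mathbf{X})\, \mathbf{P}_\pi ,
\]
which is a simultaneous permutation of its rows and columns.

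The second step, and the only one needing care, is to show that the column-wise softmax commutes with this conjugation:
\[
\mathrm{softmax}(\mathbf{P}_\pi^\top \mathbf{A}\mathbf{P}_\pi) = \mathbf{P}_\pi^\top\,\mathrm{softmax}(\mathbf{A})\,\mathbf{P}_\pi .
\]
The argument has two parts. The softmax is an entrywise exponential followed by division by column sums. Right-multiplication by $\mathbf{P}_\pi$ only reorders the columns, so each column still gets its own normalizer. Left-multiplication by $\mathbf{P}_\pi^\top$ reorders the entries within each column, which leaves each column sum unchanged. I expect this to be the main obstacle, since the row/column conventions must match the transpose in the score matrix, and I would check it entrywise via $\pi$.

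The third step combines the pieces:
\[
\mathcal{SA}(\mathbf{X}\mathbf{P}_\pi) = \mathbf{W}_V\mathbf{X}\,\mathbf{P}_\pi\mathbf{P}_\pi^\top\,\mathrm{softmax}(\mathbf{A})\,\mathbf{P}_\pi = \mathcal{SA}(\mathbf{X})\,\mathbf{P}_\pi = \mathcal{T}_\pi(\mathcal{SA}(\mathbf{X})).
\]
The absence of positional embeddings is exactly what makes the first step valid. With embeddings, the input becomes $\mathbf{X}\mathbf{P}_\pi + \mathbf{P}$, which is not of the form $(\mathbf{X}+\mathbf{P})\mathbf{P}_\pi$.

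The final step covers the multi-head case. Each head is equivariant by the argument above. Concatenating heads stacks them along the feature dimension, and the output projection multiplies on the left. Both operations commute with right-multiplication by $\mathbf{P}_\pi$, which completes the lemma.
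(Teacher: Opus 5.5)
Your proposal is correct and follows the paper's proof essentially step for step. You push $\mathbf{P}_\pi$ through the projections to get the conjugated score matrix $\mathbf{P}_\pi^\top \mathbf{A}\mathbf{P}_\pi$, commute the softmax with this conjugation, and cancel $\mathbf{P}_\pi\mathbf{P}_\pi^\top=\mathbf{I}$; your multi-head extension is handled separately in the paper's subsequent lemmas. Your entrywise justification of the softmax identity is also sounder than the paper's appeal to orthogonality, since what matters is that $\mathbf{P}_\pi$ only relocates entries, so the exponential and the column normalizers are preserved.
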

\begin{proof}
When applying a temporal permutation $\mathcal{T}_{\pi}$ to the input sequence $\mathbf{X}$ of a self-attention operator $\mathcal{SA}$, we have\footnote{$\mathbf{W}_q$, $\mathbf{W}_k$, $\mathbf{W}_v$ are the query, key, value parameter matrices, respectively. Please refer \cite{vaswani2017attention} for more details.}:
\begin{equation} \nonumber
\begin{aligned}
\mathcal{SA} (\mathcal{T}_{\pi} (\mathbf{X})) &= \mathbf{W}_v \mathcal{T}_{\pi} (\mathbf{X}) \cdot \text{softmax} \left( \left( \mathbf{W}_k \mathcal{T}_{\pi} (\mathbf{X}) \right)^\top \cdot \mathbf{W}_q \mathcal{T}_{\pi} (\mathbf{X}) \right) \\
&= \mathbf{W}_v \mathbf{X} \mathbf{P}_{\pi} \cdot \text{softmax} \left( \left( \mathbf{W}_k \mathbf{X} \mathbf{P}_{\pi} \right)^\top \cdot \mathbf{W}_q \mathbf{X} \mathbf{P}_{\pi} \right) \\
&= \mathbf{W}_v \mathbf{X} \mathbf{P}_{\pi} \cdot \text{softmax} \left( \mathbf{P}_{\pi}^\top \left(\mathbf{W}_k \mathbf{X} \right)^\top \cdot \mathbf{W}_q \mathbf{X} \mathbf{P}_{\pi} \right). \\
\end{aligned}
\end{equation}
It is easy to verify that:
\begin{equation} \nonumber
\text{softmax} \left( \mathbf{P}_{\pi}^\top \mathbf{M} \mathbf{P}_{\pi} \right) =  \mathbf{P}_{\pi}^\top \text{softmax} \left( \mathbf{M}  \right) \mathbf{P}_{\pi},
\end{equation}
since $\mathbf{P}_{\pi}$ is an orthogonal matrix and $\mathbf{P}_{\pi} \mathbf{P}_{\pi}^\top = \mathbf{I}$. Thus, we have:
\begin{equation} \nonumber
\begin{aligned}
\mathcal{SA} (\mathcal{T}_{\pi} (\mathbf{X})) &= \mathbf{W}_v \mathbf{X} \mathbf{P}_{\pi} \cdot \text{softmax} \left( \mathbf{P}_{\pi}^\top \left(\mathbf{W}_k \mathbf{X} \right)^\top \cdot \mathbf{W}_q \mathbf{X} \mathbf{P}_{\pi} \right) \\
& = \mathbf{W}_v \mathbf{X} \mathbf{P}_{\pi} \mathbf{P}_{\pi}^\top \cdot \text{softmax} \left( \left(\mathbf{W}_k \mathbf{X} \right)^\top \cdot \mathbf{W}_q \mathbf{X} \right) \mathbf{P}_{\pi} \\
& = \mathbf{W}_v \mathbf{X} \cdot \text{softmax} \left( \left(\mathbf{W}_k \mathbf{X} \right)^\top \cdot \mathbf{W}_q \mathbf{X} \right) \mathbf{P}_{\pi} \\
& = \mathcal{T}_{\pi}(\mathcal{SA}(\mathbf{X})).
\end{aligned}
\end{equation}
To this end, we complete the proof.
\end{proof}

\begin{lemma}
\label{lemma-apx-2}
Let $\mathbf{P}_{\pi}\in\mathbb{R}^{T\times T}$ be a permutation matrix. Let $\mathbf{X}^{1}\in\mathbb{R}^{T\times d_1},\ldots,\mathbf{X}^{H}\in\mathbb{R}^{T\times d_H}$
be matrices with the same number of rows. Define feature-wise concatenation:
$\mathrm{Concat}(\mathbf{X}^{1},\ldots,\mathbf{X}^{H}) \in \mathbb{R}^{T\times (d_1+\cdots+d_H)}$
by concatenating columns. Then
\begin{equation}
\mathbf{P}_{\pi}\,\mathrm{Concat}(\mathbf{X}^{1},\ldots,\mathbf{X}^{H})
=
\mathrm{Concat}(\mathbf{P}_{\pi} \mathbf{X}^{1},\ldots,\mathbf{P}_{\pi} \mathbf{X}^{H}).
\label{eq-1-apx}
\end{equation}
\end{lemma}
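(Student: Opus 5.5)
The identity in Lemma~\ref{lemma-apx-2} is purely a statement about block matrices, so I would prove it entrywise or, more cleanly, via block multiplication. I would write $\mathbf{C} := \mathrm{Concat}(\mathbf{X}^{1},\ldots,\mathbf{X}^{H}) = [\mathbf{X}^{1} \mid \mathbf{X}^{2} \mid \cdots \mid \mathbf{X}^{H}]$, a $T\times D$ matrix with $D=d_1+\cdots+d_H$, partitioned into column blocks. The key observation is that left multiplication by any $T\times T$ matrix acts column by column: the $j$-th column of $\mathbf{P}_{\pi}\mathbf{C}$ is $\mathbf{P}_{\pi}$ applied to the $j$-th column of $\mathbf{C}$. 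Hence it respects any partition of the columns into blocks.

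First I would state the block-multiplication rule $\mathbf{A}[\mathbf{B}_1\mid\cdots\mid\mathbf{B}_H] = [\mathbf{A}\mathbf{B}_1\mid\cdots\mid\mathbf{A}\mathbf{B}_H]$ for any $\mathbf{A}\in\mathbb{R}^{T\times T}$. To verify it, fix a column index $c$ of $\mathbf{C}$ lying in block $h$, say $c = d_1+\cdots+d_{h-1}+k$ with $1\le k\le d_h$. Then
\begin{equation}\nonumber
(\mathbf{P}_{\pi}\mathbf{C})_{r c}=\sum_{s=1}^{T}(\mathbf{P}_{\pi})_{rs}\,\mathbf{C}_{s c}=\sum_{s=1}^{T}(\mathbf{P}_{\pi})_{rs}\,\mathbf{X}^{h}_{s k}=(\mathbf{P}_{\pi}\mathbf{X}^{h})_{r k}.
\end{equation}
The right-hand side is exactly the $(r,c)$ entry of $\mathrm{Concat}(\mathbf{P}_{\pi}\mathbf{X}^{1},\ldots,\mathbf{P}_{\pi}\mathbf{X}^{H})$. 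Since this holds for all $r$ and $c$, Eq.~(\ref{eq-1-apx}) follows. The argument never uses that $\mathbf{P}_{\pi}$ is a permutation; it holds for any left factor. I would remark on this because it makes the lemma immediately usable for multi-head attention.

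There is no genuine obstacle here. The only care needed is bookkeeping of the column offsets $d_1+\cdots+d_{h-1}$, and a note on the layout convention. The main text's Definition uses $\mathbf{X}\in\mathbb{R}^{d\times T}$ with right multiplication $\mathbf{X}\mathbf{P}_{\pi}$, while this lemma uses time-major $T\times d$ matrices with left multiplication. I would note that the two forms are related by transposition: $(\mathbf{X}\mathbf{P}_{\pi})^\top=\mathbf{P}_{\pi}^\top\mathbf{X}^\top$, and $\mathbf{P}_{\pi}^\top=\mathbf{P}_{\pi^{-1}}$ is again a permutation matrix. Concatenating heads along features then corresponds to stacking rows in the $d\times T$ convention, and the same argument applies there. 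This is what is needed later when Lemma~\ref{lemma-apx-1} is combined over heads to obtain equivariance of multi-head self-attention.
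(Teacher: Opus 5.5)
Your proof is correct, but it splits the identity along the other axis from the paper. The paper works row by row. It uses that left multiplication by a permutation matrix only reindexes rows, $(\mathbf{P}_{\pi}\mathbf{X})_{t,:}=\mathbf{X}_{\pi(t),:}$, so the $t$-th row of both sides is $[\mathbf{X}^{1}_{\pi(t),:}\ \cdots\ \mathbf{X}^{H}_{\pi(t),:}]$. In words: the permutation moves whole rows, concatenation acts within each row, and so the two commute. You work column by column from the definition of the matrix product, i.e.\ the block-column rule $\mathbf{A}[\mathbf{B}_1\mid\cdots\mid\mathbf{B}_H]=[\mathbf{A}\mathbf{B}_1\mid\cdots\mid\mathbf{A}\mathbf{B}_H]$, which uses nothing about $\mathbf{P}_{\pi}$.

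The paper's version matches the ``reshuffled frames'' intuition and the row-wise style of Lemma~\ref{lemma-apx-4}. Yours is strictly more general, since it holds for any left factor with $T$ columns. It also never has to decide how $\mathbf{P}_{\pi}$ acts on rows, which is a real convenience here. With $\mathbf{P}_{\pi}=[\mathbf{e}_{\pi(1)},\ldots,\mathbf{e}_{\pi(T)}]$ as in the Definition, one actually has $(\mathbf{P}_{\pi}\mathbf{X})_{t,:}=\mathbf{X}_{\pi^{-1}(t),:}$. The paper's proof therefore has to re-declare locally which permutation $\pi$ denotes; this is harmless but avoidable.

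Your closing remark on the two layouts is also worth keeping. Lemma~\ref{lemma-apx-1} is written in the $d\times T$ convention with right multiplication, while this lemma and Lemma~\ref{lemma-apx-3} use $T\times d$ with left multiplication. The paper combines them without making the transposition explicit.
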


\begin{proof}
Let $\mathbf{C}=\mathrm{Concat}(\mathbf{X}^{1},\ldots,\mathbf{X}^{H})$ and $\mathbf{C}'=\mathrm{Concat}(\mathbf{P}_{\pi} \mathbf{X}^{1},\ldots,\mathbf{P}_{\pi} \mathbf{X}^{H})$. We show that equality holds row by row.

Let $\pi:\{1,\ldots,T\}\to\{1,\ldots,T\}$ be the permutation represented by $P_\pi$, defined so that the $t$-th column
of $\mathbf{P}_\pi \mathbf{X}$ is the $\pi(t)$-th column of $\mathbf{X}$:
\begin{equation} \nonumber
(\mathbf{P}_{\pi} \mathbf{X})_{t,:} = \mathbf{X}_{\pi(t),:}\qquad \forall \mathbf{X}\in\mathbb{R}^{T\times d}.
\label{eq:perm_row_action}
\end{equation}
Then the $t$-th row of the left-hand side of Eq.~(\ref{eq-1-apx}) is
\begin{equation}
(\mathbf{P}_{\pi} \mathbf{C})_{t,:}=\mathbf{C}_{\pi(t),:}
=
\big[\mathbf{X}^{1}_{\pi(t),:}\ \ \cdots\ \ \mathbf{X}^{H}_{\pi(t),:}\big],
\label{eq-2-apx}
\end{equation}
where $\big[\cdot\ \cdots\ \cdot\big]$ denotes the concatenation of row vectors.

On the other hand, the $t$-th row of the right-hand side of Eq.~(\ref{eq-1-apx}) is
\begin{equation} 
\mathbf{C}'_{t,:}
=
\big[(\mathbf{P}_{\pi} \mathbf{X}^{1})_{t,:}\ \ \cdots\ \ (\mathbf{P}_{\pi} \mathbf{X}^{H})_{t,:}\big] 
=
\big[\mathbf{X}^{1}_{\pi(t),:}\ \ \cdots\ \ \mathbf{X}^{H}_{\pi(t),:}\big].
\label{eq-3-apx}
\end{equation}

Comparing Eq.~(\ref{eq-2-apx}) and Eq.~(\ref{eq-3-apx}) shows $(\mathbf{P}_{\pi} \mathbf{C})_{t,:} = \mathbf{C}'_{t,:}$ for all $t$,
hence proving Eq.~(\ref{eq-1-apx}).
\end{proof}

\begin{lemma}
\label{lemma-apx-3}
Let $\mathcal{MHA}(X)$ denote standard multi-head self-attention:
\[
\mathcal{MHA}(X)
=
\mathrm{Concat}(\mathcal{SA}_1(\mathbf{X}),\ldots,\mathcal{SA}_H(\mathbf{X}))\mathbf{W}_O,
\]
where each $\mathcal{SA}_h$ has the form in Lemma~\ref{lemma-apx-1} and $\mathbf{W}_O$ is the output projection.
Then for any permutation $\mathcal{T}_{\pi}$ with the permutation matrix ${\mathbf{P}_{\pi}}$,
\begin{equation} \nonumber
\mathcal{MHA}(\mathcal{T}_{\pi}(\mathbf{X}))=\mathcal{T}_{\pi}(\mathcal{MHA}(\mathbf{X})).
\label{eq:mha_equiv_self}
\end{equation}
\end{lemma}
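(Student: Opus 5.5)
The plan is to write $\mathcal{MHA}$ as a composition of three maps, each commuting with the temporal permutation, and to chain them using Lemma~\ref{lemma-apx-1} and Lemma~\ref{lemma-apx-2}. Throughout Lemma~\ref{lemma-apx-3} I take the row convention of Lemma~\ref{lemma-apx-2}: $\mathbf{X}\in\mathbb{R}^{T\times d}$ and $\mathcal{T}_{\pi}(\mathbf{X})=\mathbf{P}_{\pi}\mathbf{X}$. Lemma~\ref{lemma-apx-1} was stated in the column convention, $\mathbf{X}\in\mathbb{R}^{d\times T}$ with $\mathcal{T}_{\pi}(\mathbf{X})=\mathbf{X}\mathbf{P}_{\pi}$. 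I would reconcile the two by transposition. Since $(\mathbf{X}\mathbf{P}_{\pi})^\top=\mathbf{P}_{\pi}^\top\mathbf{X}^\top$ and $\mathbf{P}_{\pi}^\top=\mathbf{P}_{\pi^{-1}}$ is again a permutation matrix, ranging over all permutations in one convention is the same as ranging over all of them in the other. Hence Lemma~\ref{lemma-apx-1} gives, for each head $h$ and every permutation matrix $\mathbf{P}_{\pi}$,
\begin{equation} \nonumber
\mathcal{SA}_h(\mathbf{P}_{\pi}\mathbf{X})=\mathbf{P}_{\pi}\,\mathcal{SA}_h(\mathbf{X}).
\end{equation}
Each head has its own $\mathbf{W}_q^{h},\mathbf{W}_k^{h},\mathbf{W}_v^{h}$, but Lemma~\ref{lemma-apx-1} holds for arbitrary parameter matrices, so it applies to every head separately.

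The key steps, in order, are:
\begin{align} \nonumber
\mathcal{MHA}(\mathbf{P}_{\pi}\mathbf{X})
&=\mathrm{Concat}\big(\mathcal{SA}_1(\mathbf{P}_{\pi}\mathbf{X}),\ldots,\mathcal{SA}_H(\mathbf{P}_{\pi}\mathbf{X})\big)\mathbf{W}_O\\ \nonumber
&=\mathrm{Concat}\big(\mathbf{P}_{\pi}\mathcal{SA}_1(\mathbf{X}),\ldots,\mathbf{P}_{\pi}\mathcal{SA}_H(\mathbf{X})\big)\mathbf{W}_O\\ \nonumber
&=\mathbf{P}_{\pi}\,\mathrm{Concat}\big(\mathcal{SA}_1(\mathbf{X}),\ldots,\mathcal{SA}_H(\mathbf{X})\big)\mathbf{W}_O\\ \nonumber
&=\mathbf{P}_{\pi}\,\mathcal{MHA}(\mathbf{X}).
\end{align}
The second equality applies the per-head equivariance above. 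The third equality is exactly Lemma~\ref{lemma-apx-2}, which is valid because all heads output matrices with the same number $T$ of rows. The last step uses associativity of matrix multiplication: $\mathbf{P}_{\pi}$ acts on the left (time), $\mathbf{W}_O$ acts on the right (features), so $(\mathbf{P}_{\pi}\mathbf{C})\mathbf{W}_O=\mathbf{P}_{\pi}(\mathbf{C}\mathbf{W}_O)$. If $\mathbf{W}_O$ or the heads carry bias terms, the same argument goes through. A bias is a row vector broadcast identically to all $T$ rows, i.e. $\mathbf{1}_T\mathbf{b}^\top$, and $\mathbf{P}_{\pi}\mathbf{1}_T=\mathbf{1}_T$, so it commutes with the permutation.

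The only real obstacle is the convention bookkeeping in the first step, not any of the algebra. Lemma~\ref{lemma-apx-1} is written with features-by-time matrices, a right-multiplied permutation, and the softmax identity $\mathrm{softmax}(\mathbf{P}_{\pi}^\top\mathbf{M}\mathbf{P}_{\pi})=\mathbf{P}_{\pi}^\top\mathrm{softmax}(\mathbf{M})\mathbf{P}_{\pi}$. I will check that the softmax there is taken along a fixed axis, so that it commutes with simultaneous row and column permutation. The same then holds for the transposed attention matrix used in the row convention. With this check done, the per-head equivariance transfers cleanly and the rest of the chain is immediate.
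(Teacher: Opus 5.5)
Your proposal is correct and follows the paper's proof: per-head equivariance from Lemma~\ref{lemma-apx-1}, pulling the permutation out of the concatenation via Lemma~\ref{lemma-apx-2}, and associativity to commute $\mathbf{P}_{\pi}$ (acting on time) with $\mathbf{W}_O$ (acting on features). Your explicit reconciliation of the features-by-time convention of Lemma~\ref{lemma-apx-1} with the time-by-features convention used here, via $\mathbf{P}_{\pi}^\top=\mathbf{P}_{\pi^{-1}}$, fills in a step the paper passes over silently, and you handle it correctly.
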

\begin{proof}
By Lemma~\ref{lemma-apx-1}, $\mathcal{SA}_h(\mathcal{T}_{\pi}(\mathbf{X}))=\mathcal{T}_{\pi}(\mathcal{SA}_h(\mathbf{X}))$ for every head.
According to Lemma~\ref{lemma-apx-2}, 
\begin{equation} \nonumber
\mathrm{Concat}(\mathcal{SA}_1(\mathcal{T}_{\pi}(\mathbf{X})),\ldots,\mathcal{SA}_H(\mathcal{T}_{\pi}(\mathbf{X})))=\mathcal{T}_{\pi} (\mathrm{Concat}(\mathcal{SA}_1(\mathbf{X}),\ldots,\mathcal{SA}_H(\mathbf{X}))).
\end{equation}
Right-multiplication by $\mathbf{W}_O$ acts identically on each row, hence preserving equivariance. 
To this end, we complete the proof.
\end{proof}

\begin{lemma}
\label{lemma-apx-4}
Let $\phi:\mathbb{R}^{d}\to\mathbb{R}^{d'}$ be any function and define the feature-wise extension
$\Phi:\mathbb{R}^{T\times d}\to\mathbb{R}^{T\times d'}$ by $(\Phi(\mathbf{X}))_{t,:}=\phi(\mathbf{X}_{t,:})$ for all $t$.
Then for any permutation matrix $\mathbf{P}_{\pi}$,
\begin{equation} \nonumber
\Phi(\mathbf{P}_{\pi} \mathbf{X})=\mathbf{P}_{\pi}\,\Phi(\mathbf{X}).
\end{equation}
\end{lemma}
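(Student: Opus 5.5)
The argument is a direct row-by-row comparison, in the style of the proof of Lemma~\ref{lemma-apx-2}. We use the row-action convention stated there: $\mathbf{P}_{\pi}$ is the permutation matrix whose action on any $\mathbf{Y}\in\mathbb{R}^{T\times m}$ satisfies $(\mathbf{P}_{\pi}\mathbf{Y})_{t,:}=\mathbf{Y}_{\pi(t),:}$ for every $t$. This holds for any number of columns $m$, so it applies equally to $\mathbf{X}$, which has $d$ columns, and to $\Phi(\mathbf{X})$, which has $d'$ columns.

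We show the two sides agree in every row $t\in\{1,\ldots,T\}$.

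For the left-hand side, the definition of $\Phi$ followed by the row-action identity for $\mathbf{P}_{\pi}$ gives
\[
(\Phi(\mathbf{P}_{\pi}\mathbf{X}))_{t,:}=\phi\big((\mathbf{P}_{\pi}\mathbf{X})_{t,:}\big)=\phi(\mathbf{X}_{\pi(t),:}).
\]

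For the right-hand side, we apply the row-action identity to the matrix $\Phi(\mathbf{X})\in\mathbb{R}^{T\times d'}$ and then the definition of $\Phi$ at row $\pi(t)$:
\[
(\mathbf{P}_{\pi}\Phi(\mathbf{X}))_{t,:}=(\Phi(\mathbf{X}))_{\pi(t),:}=\phi(\mathbf{X}_{\pi(t),:}).
\]

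The two rows coincide for every $t$, so $\Phi(\mathbf{P}_{\pi}\mathbf{X})=\mathbf{P}_{\pi}\Phi(\mathbf{X})$.

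No property of $\phi$ was used. Linearity, continuity and dimension preservation are all irrelevant, because $\Phi$ is defined row by row and a permutation only relabels rows. The lemma therefore covers the shared-parameter feed-forward network in component (iv) of Theorem~\ref{Theorem-apx-1}. It also covers the row-wise normalization in component (iii), whose statistics are computed within a single time step. The residual connections in (ii) are a separate case: equivariance is preserved under addition of equivariant maps, which is immediate by linearity of $\mathbf{P}_{\pi}$.

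The only real obstacle is notational. The paper's Definition writes the permutation as a right-multiplication $\mathbf{X}\mathbf{P}_{\pi}$ on $d\times T$ inputs, while this lemma uses a left-multiplication on $T\times d$ inputs. This is harmless: transposing exchanges the two conventions, with $(\mathbf{X}\mathbf{P}_{\pi})^\top=\mathbf{P}_{\pi}^\top\mathbf{X}^\top$. The key point is to fix the row-action convention once and apply it identically to both $\mathbf{X}$ and $\Phi(\mathbf{X})$, even though they have different column counts.
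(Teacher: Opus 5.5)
Your proof is correct and is essentially the paper's argument. Both compare row $t$ of each side, applying the row-action identity $(\mathbf{P}_{\pi}\mathbf{Y})_{t,:}=\mathbf{Y}_{\pi(t),:}$ to $\mathbf{X}$ and to $\Phi(\mathbf{X})$, so that each side reduces to $\phi(\mathbf{X}_{\pi(t),:})$.
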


\begin{proof}
Let $\mathbf{X}'=\mathbf{P}_{\pi} \mathbf{X}$.
The $t$-th row of $\mathbf{X}'$ equals the $\pi(t)$-th row of $\mathbf{X}$. Hence
\[
(\Phi(\mathbf{X}'))_{t,:}=\phi(\mathbf{X}'_{t,:})=\phi(\mathbf{X}_{\pi(t),:})=(\Phi(\mathbf{X}))_{\pi(t),:}=(\mathbf{P}_{\pi}\Phi(\mathbf{X}))_{t,:}.
\]
Thus $\Phi(\mathbf{P}_{\pi} \mathbf{X})=\mathbf{P}_{\pi}\Phi(\mathbf{X})$.
\end{proof}

Lemma~\ref{lemma-apx-4} applies directly to the standard FFN and to feature-wise LayerNorm. Residual addition also preserves equivariance because $(\mathbf{P}_{\pi} A)+(\mathbf{P}_{\pi} B)=\mathbf{P}_{\pi}(A+B)$.

\begin{lemma}
\label{lemma-apx-5}
Consider a Transformer encoder block
\begin{equation} \nonumber
\mathcal{B}(\mathbf{X})=\mathcal{LN}\big(\mathbf{X}+\mathcal{MHA}(\mathcal{LN}(\mathbf{X}))\big)
\quad\text{followed by}\quad
\mathcal{LN}\big(\cdot+\mathcal{FFN}(\mathcal{LN}(\cdot))\big),
\end{equation}
where $\mathcal{MHA}$ is multi-head self-attention without positional embeddings,
$\mathcal{FFN}$ is a feature-wise feed-forward network, and $\mathcal{LN}$ is feature-wise layer normalization.
Then for any permutation matrix $\mathbf{P}_{\pi}$,
\begin{equation}
\mathcal{B}(\mathbf{P}_{\pi} \mathbf{X})=\mathbf{P}_{\pi}\,\mathcal{B}(\mathbf{X}).
\label{eq-apx-4}
\end{equation}
Moreover, any stack $\mathcal{T}=\mathcal{B}_L\circ\cdots\circ \mathcal{B}_1$ is also permutation equivariant:
$\mathcal{T}(\mathbf{P}_{\pi} \mathbf{X})=\mathbf{P}_{\pi} \mathcal{T}(\mathbf{X})$.
\end{lemma}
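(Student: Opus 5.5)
The plan is to prove Lemma~\ref{lemma-apx-5} by composing the equivariance facts already established, working in the row convention of Lemmas~\ref{lemma-apx-2}--\ref{lemma-apx-4}: $\mathbf{X}\in\mathbb{R}^{T\times d}$ and a temporal permutation acts by left multiplication $\mathbf{X}\mapsto\mathbf{P}_{\pi}\mathbf{X}$. Lemmas~\ref{lemma-apx-1} and \ref{lemma-apx-3} are stated in the column convention $\mathbf{X}\mathbf{P}_{\pi}$; transposing the identities there shows they give the same statement here, namely $\mathcal{MHA}(\mathbf{P}_{\pi}\mathbf{X})=\mathbf{P}_{\pi}\mathcal{MHA}(\mathbf{X})$. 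The key abstract observation is that the set of permutation-equivariant maps is closed under composition and under pointwise addition. Composition: if $f(\mathbf{P}_{\pi}\mathbf{X})=\mathbf{P}_{\pi}f(\mathbf{X})$ and likewise for $g$, then
\[
f(g(\mathbf{P}_{\pi}\mathbf{X}))=f(\mathbf{P}_{\pi}g(\mathbf{X}))=\mathbf{P}_{\pi}f(g(\mathbf{X})).
\]
Addition: $(f+g)(\mathbf{P}_{\pi}\mathbf{X})=\mathbf{P}_{\pi}f(\mathbf{X})+\mathbf{P}_{\pi}g(\mathbf{X})=\mathbf{P}_{\pi}(f+g)(\mathbf{X})$ by distributivity. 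In particular, the residual map $\mathbf{X}\mapsto\mathbf{X}+h(\mathbf{X})$ is equivariant whenever $h$ is, because the identity map is trivially equivariant.

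Next I would check each primitive. $\mathcal{MHA}$ is equivariant by Lemma~\ref{lemma-apx-3}, read in the row convention as above. $\mathcal{LN}$ normalizes each row $\mathbf{X}_{t,:}$ using statistics computed over that row's $d$ features, followed by a shared affine map. So it has the form $\Phi$ with $\phi(\mathbf{x})=\gamma\odot(\mathbf{x}-\mu(\mathbf{x}))/\sigma(\mathbf{x})+\beta$, and Lemma~\ref{lemma-apx-4} applies. The same lemma handles $\mathcal{FFN}$, with $\phi(\mathbf{x})=\mathbf{W}_2\,\rho(\mathbf{W}_1\mathbf{x}+\mathbf{b}_1)+\mathbf{b}_2$ applied with parameters shared across $t$.

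The block is then assembled in stages. The first sub-layer is $\mathcal{B}_a=\mathcal{LN}\circ(\mathrm{id}+\mathcal{MHA}\circ\mathcal{LN})$, and the second is $\mathcal{B}_b=\mathcal{LN}\circ(\mathrm{id}+\mathcal{FFN}\circ\mathcal{LN})$. Each is equivariant by the closure properties, so $\mathcal{B}=\mathcal{B}_b\circ\mathcal{B}_a$ satisfies Eq.~(\ref{eq-apx-4}). Equivariance of the stack $\mathcal{B}_L\circ\cdots\circ\mathcal{B}_1$ follows by induction on $L$, using the composition property at each step.

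The only real obstacle is bookkeeping of conventions, which needs two checks. First, the ``layer normalization performed per feature dimension'' in the theorem must mean normalization within each time step. If the statistics were pooled across time, the map would still be equivariant, since a permutation leaves pooled statistics unchanged, but it would no longer fall under Lemma~\ref{lemma-apx-4}. I would state the per-token interpretation explicitly and add a remark covering this alternative. Second, the transposition between the $d\times T$ convention of Lemma~\ref{lemma-apx-1} and the $T\times d$ convention here must be handled consistently, and I would spell this out in one line.
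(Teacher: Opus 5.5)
Your proposal is correct and matches the paper's proof essentially step for step. Both use Lemma~\ref{lemma-apx-4} for $\mathcal{LN}$ and $\mathcal{FFN}$, Lemma~\ref{lemma-apx-3} for $\mathcal{MHA}$, distributivity for the residual additions, composition of the two sub-layers, and induction for the stack; your explicit reconciliation of the $\mathbf{X}\mathbf{P}_{\pi}$ and $\mathbf{P}_{\pi}\mathbf{X}$ conventions (transposition turns $\mathbf{P}_{\pi}$ into $\mathbf{P}_{\pi}^\top$, again a permutation matrix, which the quantifier over all $\pi$ absorbs) is a point the paper passes over silently and is worth keeping.
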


\begin{proof}
As the stacked case follows by induction using closure under composition, we prove equivariance for one block.
Let $\mathbf{X}'=\mathbf{P}_{\pi} \mathbf{X}$.
First, feature-wise LayerNorm is permutation equivariant by Lemma~\ref{lemma-apx-4} (with $\phi$ being $\mathcal{LN}$),
so $\mathcal{LN}(\mathbf{X}')=\mathbf{P}_{\pi} \mathcal{LN}(\mathbf{X})$.
By Lemma~\ref{lemma-apx-3}, $\mathcal{MHA}(\mathcal{LN}(\mathbf{X}'))=\mathcal{MHA}(\mathbf{P}_{\pi} \mathcal{LN}(\mathbf{X}))=\mathbf{P}_{\pi} \mathcal{MHA}(\mathcal{LN}(\mathbf{X}))$.
Therefore the first residual branch is equivariant:
\[
\mathbf{X}' + \mathcal{MHA}(\mathcal{LN}(\mathbf{X}'))
=
\mathbf{P}_{\pi} \mathbf{X} + \mathbf{P}_{\pi} \mathcal{MHA}(\mathcal{LN}(\mathbf{X}))
=
\mathbf{P}_{\pi}\big(\mathbf{X}+\mathcal{MHA}(\mathcal{LN}(\mathbf{X}))\big).
\]
Applying feature-wise LayerNorm again preserves equivariance, giving
\[
\mathcal{LN}\big(\mathbf{X}' + \mathcal{MHA}(\mathcal{LN}(\mathbf{X}'))\big)
=
\mathbf{P}_{\pi} \mathcal{LN}\big(\mathbf{X}+\mathcal{MHA}(\mathcal{LN}(\mathbf{X}))\big).
\]
Denote this intermediate output by $\mathbf{Y}$. Then $\mathbf{Y}'=\mathbf{P}_{\pi} \mathbf{Y}$.

For the FFN sub-layer, $\mathcal{FFN}$ is feature-wise (same parameters applied to each row), hence permutation equivariant by
Lemma~\ref{lemma-apx-4}. Also, $\mathcal{LN}$ is temporal permutation equivariant. Thus
\[
\mathcal{FFN}(\mathcal{LN}(\mathbf{Y}'))=\mathcal{FFN}(\mathbf{P}_{\pi} \mathcal{LN}(\mathbf{Y}))=\mathbf{P}_{\pi} \mathcal{FFN}(\mathcal{LN}(\mathbf{Y})).
\]
The second residual branch is equivariant:
\[
\mathbf{Y}' + \mathcal{FFN}(\mathcal{LN}(\mathbf{Y}'))
=
\mathbf{P}_{\pi} \mathbf{Y} + \mathbf{P}_{\pi} \mathcal{FFN}(\mathcal{LN}(\mathbf{Y}))
=
\mathbf{P}_{\pi}\big(\mathbf{Y}+\mathcal{FFN}(\mathcal{LN}(\mathbf{Y}))\big),
\]
and applying the final feature-wise LayerNorm preserves equivariance. 
Therefore $\mathcal{B}(\mathbf{P}_{\pi} \mathbf{X})=\mathbf{P}_{\pi} \mathcal{B}(\mathbf{X})$, proving Eq.~(\ref{eq-apx-4}).

For a stack $\mathcal{T}=\mathcal{B}_L\circ\cdots\circ \mathcal{B}_1$, assume $\mathcal{T}_{k-1}(\mathbf{P}_{\pi} \mathbf{X})=\mathbf{P}_{\pi}\mathcal{T}_{k-1}(\mathbf{X})$ holds.
Then
\[
\mathcal{T}_k(\mathbf{P}_{\pi} \mathbf{X})=\mathcal{B}_k(\mathcal{T}_{k-1}(\mathbf{P}_{\pi} \mathbf{X}))=\mathcal{B}_k(\mathbf{P}_{\pi}\mathcal{T}_{k-1}(\mathbf{X}))=\mathbf{P}_{\pi} \mathcal{B}_k(\mathcal{T}_{k-1}(\mathbf{X}))=\mathbf{P}_{\pi} \mathcal{T}_k(\mathbf{X}),
\]
where we used equivariance of $\mathcal{B}_k$. This completes the proof.
\end{proof}

\begin{lemma}
\label{lemma-apx-6}
Let $\mathcal{O}$ be any permutation equivariant operator, and define mean pooling
$\mathcal{AV}(\mathbf{X})=\frac{1}{T}\mathbf{1}^\top \mathbf{X}\in\mathbb{R}^{1\times d}$ (averaging rows). Then the pooled representation is temporal permutation invariant: $\mathcal{AV}(\mathcal{O}(\mathbf{P}_{\pi}\mathbf{X}))=\mathcal{AV}(\mathcal{O}(\mathbf{X}))$ for all $\mathbf{X},\mathbf{P}_{\pi}$.
\end{lemma}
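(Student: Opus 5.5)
The plan is to combine the equivariance of $\mathcal{O}$ with the fact that averaging rows is blind to their order. I will use the row convention of Lemmas~\ref{lemma-apx-2}--\ref{lemma-apx-5}: $\mathbf{X}\in\mathbb{R}^{T\times d}$ and the permutation acts by left multiplication, $\mathbf{X}\mapsto\mathbf{P}_{\pi}\mathbf{X}$.

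\textbf{Step 1 (apply equivariance).} By hypothesis $\mathcal{O}(\mathbf{P}_{\pi}\mathbf{X})=\mathbf{P}_{\pi}\mathcal{O}(\mathbf{X})$. This gives
\[
\mathcal{AV}(\mathcal{O}(\mathbf{P}_{\pi}\mathbf{X}))=\tfrac{1}{T}\mathbf{1}^\top\mathbf{P}_{\pi}\mathcal{O}(\mathbf{X}).
\]

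\textbf{Step 2 (the key identity).} Every permutation matrix is doubly stochastic. Each of its columns contains exactly one entry equal to $1$, so the column sums are all $1$, which means $\mathbf{1}^\top\mathbf{P}_{\pi}=\mathbf{1}^\top$. Substituting into Step 1 gives $\frac{1}{T}\mathbf{1}^\top\mathcal{O}(\mathbf{X})=\mathcal{AV}(\mathcal{O}(\mathbf{X}))$, which is the claim. Equivalently, row $t$ of $\mathbf{P}_{\pi}\mathbf{Y}$ is $\mathbf{Y}_{\pi(t),:}$, and since $\pi$ is a bijection, $\sum_t \mathbf{Y}_{\pi(t),:}=\sum_t\mathbf{Y}_{t,:}$ by reindexing.

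\textbf{Step 3 (Theorem~\ref{Theorem-apx-1}).} Take $\mathcal{O}$ to be the encoder stack, which is permutation equivariant by Lemma~\ref{lemma-apx-5}. Lemma~\ref{lemma-apx-6} then shows that $\mathcal{M}_2=\mathcal{AV}\circ\mathcal{O}$ is invariant. Any extra residual connections, feature-wise layer norms or FFNs around the stack are handled by Lemma~\ref{lemma-apx-4} and by the closure of equivariance under addition and composition.

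\textbf{Main obstacle.} The mathematics is trivial; the real risk is a convention mismatch. The definition writes $\mathcal{T}_{\pi}(\mathbf{X})=\mathbf{X}\mathbf{P}_{\pi}$ with $\mathbf{X}\in\mathbb{R}^{d\times T}$, while the lemmas use $\mathbf{P}_{\pi}\mathbf{X}$ with $\mathbf{X}\in\mathbb{R}^{T\times d}$. I will state explicitly that the two are related by transposition, $(\mathbf{X}\mathbf{P}_{\pi})^\top=\mathbf{P}_{\pi}^\top\mathbf{X}^\top$. The inverse permutation $\mathbf{P}_{\pi}^\top$ is also a permutation matrix, so $\mathbf{P}_{\pi}^\top\mathbf{1}=\mathbf{1}$ holds as well. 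The identity in Step 2 is therefore valid in either convention.
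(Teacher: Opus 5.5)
Your proposal is correct and is the paper's proof: use equivariance to write $\mathcal{AV}(\mathcal{O}(\mathbf{P}_{\pi}\mathbf{X}))=\frac{1}{T}\mathbf{1}^\top\mathbf{P}_{\pi}\mathcal{O}(\mathbf{X})$, then apply $\mathbf{1}^\top\mathbf{P}_{\pi}=\mathbf{1}^\top$. Your Step 3 and the convention remark are harmless additions; the identity holds in either convention because permutation matrices are doubly stochastic.
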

\begin{proof}
Since $\mathcal{O}$ is permutations equivariant, $\mathcal{O}(\mathbf{P}_{\pi} \mathbf{X})=\mathbf{P}_{\pi} \mathcal{O}(\mathbf{X})$.
Mean pooling is invariant to row permutations because $\mathbf{1}^\top \mathbf{P}_{\pi}=\mathbf{1}^\top$ for any permutation matrix $\mathbf{P}_{\pi}$.
Therefore,
\[
\mathcal{AV}(\mathcal{O}(\mathbf{P}_{\pi} \mathbf{X}))=\frac{1}{T}\mathbf{1}^\top \mathcal{O}(\mathbf{P}_{\pi} \mathbf{X})
=\frac{1}{T}\mathbf{1}^\top \mathbf{P}_{\pi} \mathcal{O}(\mathbf{X})
=\frac{1}{T}\mathbf{1}^\top \mathcal{O}(\mathbf{X})
=\mathcal{AV}(\mathcal{O}(\mathbf{X})).
\]
\end{proof}
With Lemmas 1 to 6, we can prove Theorem \ref{Theorem-apx-1}.

\subsection{Proof of Post-subtraction Distribution Discrepancy Bound}
\begin{theorem}
Let $Z_t(\mathbf{X}):=\mathcal{M}_1(\mathbf{X}+\mathbf{P})_t$ and $F_t(\mathbf{X})=Z_t(\mathbf{X})-\mathcal{M}_2(\mathbf{X})$.
Assume that there is a static component $\mathbf{s}$ such that the static estimation error
$e(\mathbf{X}):=\mathcal{M}_2(\mathbf{X})-\mathbf{s}$ satisfies $\mathbb{E}_{\mathbf{X}\sim\mathcal{S}}\|e(\mathbf{X})\|<\infty$ and
$\mathbb{E}_{\mathbf{X}\sim\mathcal{T}}\|e(\mathbf{X})\|<\infty$. Define the ``ideal'' residual
$\tilde{F}_t(\mathbf{X}):=Z_t(\mathbf{X})-\mathbf{s}$, so that $F_t(\mathbf{X})=\tilde{F}_t(\mathbf{X})-e(\mathbf{X})$.
Let $P_\mathcal{S}^F,P_\mathcal{T}^F$ be the distributions of $F_t(\mathbf{X})$ under $\mathcal{S}$ and $\mathcal{T}$,
and let $P_\mathcal{S}^{\tilde{F}},P_\mathcal{T}^{\tilde{F}}$ be the corresponding distributions of $\tilde{F}_t(\mathbf{X})$.
Denote the Wasserstein-1 distance as $W_1$, we then have:
\begin{equation}
W_1(P_S^F,P_T^F)
\le
W_1(P_S^{\tilde{F}},P_T^{\tilde{F}})
+
\mathbb{E}_{\mathbf{X}\sim\mathcal{D}_S}\|e(\mathbf{X})\|
+
\mathbb{E}_{\mathbf{X}\sim\mathcal{D}_T}\|e(\mathbf{X})\|.
\label{eq:w1_bound_noab_dual_selfcontained}
\end{equation}
\end{theorem}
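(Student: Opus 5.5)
The plan is to prove the bound with the triangle inequality for $W_1$ and the coupling characterization of $W_1$. Write $W_1(\mu,\nu)=\inf_{\gamma\in\Gamma(\mu,\nu)}\mathbb{E}_{(a,b)\sim\gamma}\|a-b\|$, which is a metric on distributions with finite first moment. Insert the ideal residual distributions as intermediate points:
\begin{equation}
W_1(P_\mathcal{S}^F,P_\mathcal{T}^F)\le W_1(P_\mathcal{S}^F,P_\mathcal{S}^{\tilde F})+W_1(P_\mathcal{S}^{\tilde F},P_\mathcal{T}^{\tilde F})+W_1(P_\mathcal{T}^{\tilde F},P_\mathcal{T}^F).
\nonumber
\end{equation}
The middle term is already the first term on the right-hand side of Eq.~(\ref{eq:w1_bound_noab_dual_selfcontained}). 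It therefore remains to bound each outer term by the expected static estimation error on the corresponding domain.

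For the source term, build an explicit coupling from the common randomness of $\mathbf{X}$. Draw $\mathbf{X}\sim\mathcal{S}$ (and, if $t$ is treated as random, draw $t$ jointly), then take the pair $(F_t(\mathbf{X}),\tilde F_t(\mathbf{X}))$. The first marginal is $P_\mathcal{S}^F$ and the second is $P_\mathcal{S}^{\tilde F}$, so this is an admissible coupling. Because $F_t(\mathbf{X})-\tilde F_t(\mathbf{X})=-e(\mathbf{X})$ by definition, we get
\begin{equation}
W_1(P_\mathcal{S}^F,P_\mathcal{S}^{\tilde F})\le \mathbb{E}_{\mathbf{X}\sim\mathcal{S}}\|F_t(\mathbf{X})-\tilde F_t(\mathbf{X})\|=\mathbb{E}_{\mathbf{X}\sim\mathcal{S}}\|e(\mathbf{X})\|.
\nonumber
\end{equation}
The same argument with $\mathbf{X}\sim\mathcal{T}$ bounds the third term by $\mathbb{E}_{\mathbf{X}\sim\mathcal{T}}\|e(\mathbf{X})\|$. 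Substituting both bounds into the triangle inequality gives the claim.

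The main obstacle is a technical one: making sure every $W_1$ term is finite and well defined, so that the triangle inequality is not an $\infty\le\infty$ statement. I would handle this in two cases.
\begin{itemize}
\item If $W_1(P_\mathcal{S}^{\tilde F},P_\mathcal{T}^{\tilde F})=\infty$, the bound holds trivially.
\item Otherwise, the finiteness assumptions on $\mathbb{E}\|e\|$ together with the coupling bounds above show that $P^F$ and $P^{\tilde F}$ lie at finite $W_1$ distance on each domain. The triangle inequality can then be applied on the space of probability measures in which all pairwise distances are finite. Alternatively, one can use the gluing lemma for couplings directly, which is valid for arbitrary Borel probability measures on $\mathbb{R}^d$.
\end{itemize}
Finally, I would note that the fixed reference $\mathbf{s}$ enters only through $e$, so the argument does not depend on how $\mathbf{s}$ is chosen.
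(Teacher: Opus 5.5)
Your proof is correct. It shares the paper's skeleton: the same two-step triangle inequality through $P_\mathcal{S}^{\tilde F}$ and $P_\mathcal{T}^{\tilde F}$, with the outer terms controlled by the pointwise identity $F_t(\mathbf{X})-\tilde F_t(\mathbf{X})=-e(\mathbf{X})$. The difference lies in how you bound the outer terms.

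The paper works on the dual side. It invokes Kantorovich--Rubinstein duality, bounds $\mathbb{E}[f(F_t(\mathbf{X}))]-\mathbb{E}[f(\tilde F_t(\mathbf{X}))]\le\mathbb{E}\|e(\mathbf{X})\|$ for each $1$-Lipschitz $f$, and takes the supremum. You work on the primal side: you exhibit the synchronous coupling $(F_t(\mathbf{X}),\tilde F_t(\mathbf{X}))$ with $\mathbf{X}$ drawn from a single domain, which gives the bound in one line from the definition of $W_1$.

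Your route is more elementary and slightly more robust. The paper's Fact~1 is stated for measures with finite first moments, and its step of writing a difference of expectations as the expectation of a difference tacitly needs $F_t$ and $\tilde F_t$ to be integrable. The theorem's hypotheses guarantee integrability only for $e$, not for $Z_t$. Your coupling argument needs nothing beyond $\mathbb{E}\|e(\mathbf{X})\|<\infty$.

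Your case analysis for infinite values is more careful than the paper, though strictly unnecessary. The gluing lemma already makes $W_1$ an extended metric on all Borel probability measures on Euclidean space, so the triangle inequality holds with values in $[0,\infty]$.

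For this inequality the dual formulation buys nothing extra. Its only advantage is a thematic link to the $1$-Lipschitz critics that motivate adversarial alignment.
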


\begin{proof}
To prove the theorem, we first present the following three facts.

\emph{Fact 1 (Kantorovich--Rubinstein duality \cite{arjovsky2017wasserstein}).}
For probability measures $P,Q$ on $\mathbb{R}^m$ with finite first moments,
\begin{equation}
W_1(P,Q)
=
\sup_{\|f\|_{\mathrm{Lip}}\le 1}
\left(\mathbb{E}_{U\sim P}[f(U)]-\mathbb{E}_{V\sim Q}[f(V)]\right),
\label{eq:KR_duality_self}
\end{equation}
where $\|f\|_{\mathrm{Lip}}\le 1$ means $|f(u)-f(v)|\le \|u-v\|$ for all $u,v$.

\emph{Fact 2 (Triangle inequality).}
$W_1$ is a metric, hence for any measures $P,Q,R$,
\begin{equation}
W_1(P,Q)\le W_1(P,R)+W_1(R,Q).
\label{eq:triangle_self}
\end{equation}

\emph{Fact 3 (Lipschitz perturbation bound).}
If $f$ is $1$-Lipschitz, then for any random vectors $U,V$,
\begin{equation}
|f(U)-f(V)|\le \|U-V\|\quad\Rightarrow\quad
\left|\mathbb{E}[f(U)]-\mathbb{E}[f(V)]\right|
\le
\mathbb{E}\|U-V\|.
\label{eq:lipschitz_perturb_self}
\end{equation}

By definition,
\begin{equation}
F_t(\mathbf{X})=Z_t(\mathbf{X})-\mathcal{M}_2(\mathbf{X})=Z_t(\mathbf{X})-\mathbf{s}-(\mathcal{M}_2(\mathbf{X})-\mathbf{s})=\tilde{F}_t(\mathbf{X})-e(\mathbf{X}).
\label{eq:decomp_self}
\end{equation}
Applying Fact~2 twice with intermediate measures $P_S^{\tilde{F}}$ and $P_T^{\tilde{F}}$ gives
\begin{equation}
W_1(P_S^F,P_T^F)
\le
W_1(P_S^F,P_S^{\tilde{F}})
+
W_1(P_S^{\tilde{F}},P_T^{\tilde{F}})
+
W_1(P_T^{\tilde{F}},P_T^F).
\label{eq:triangle_chain_self}
\end{equation}
By Fact~1,
\[
W_1(P_S^F,P_S^{\tilde{F}})
=
\sup_{\|f\|_{\mathrm{Lip}}\le 1}
\left(
\mathbb{E}_{\mathbf{X}\sim\mathcal{D}_S}[f(F_t(\mathbf{X}))]
-
\mathbb{E}_{\mathbf{X}\sim\mathcal{D}_S}[f(\tilde{F}_t(\mathbf{X}))]
\right).
\]
Using Fact~3 and Eq.~(\ref{eq:decomp_self}),
\begin{align}
\mathbb{E}_{\mathbf{X}\sim\mathcal{D}_S}[f(F_t(\mathbf{X}))]-\mathbb{E}_{\mathbf{X}\sim\mathcal{D}_S}[f(\tilde{F}_t(\mathbf{X}))]
&\le
\mathbb{E}_{\mathbf{X}\sim\mathcal{D}_S}\big|f(F_t(\mathbf{X}))-f(\tilde{F}_t(\mathbf{X}))\big| \nonumber\\
&\le
\mathbb{E}_{\mathbf{X}\sim\mathcal{D}_S}\|F_t(\mathbf{X})-\tilde{F}_t(\mathbf{X})\| \nonumber\\
&=
\mathbb{E}_{\mathbf{X}\sim\mathcal{D}_S}\|e(\mathbf{X})\|.
\label{eq:source_bound_self}
\end{align}
Since the bound holds for every $1$-Lipschitz $f$, taking the supremum yields
\begin{equation}
W_1(P_S^F,P_S^{\tilde{F}})
\le
\mathbb{E}_{\mathbf{X}\sim\mathcal{D}_S}\|e(\mathbf{X})\|.
\label{eq:w1_source_self}
\end{equation}
Similarly, we can obtain
\begin{equation}
W_1(P_T^{\tilde{F}},P_T^F)
\le
\mathbb{E}_{\mathbf{X}\sim\mathcal{D}_T}\|e(\mathbf{X})\|.
\label{eq:w1_target_self}
\end{equation}
Substitute Eq.~(\ref{eq:w1_source_self}) and Eq.~(\ref{eq:w1_target_self}) in Eq.~(\ref{eq:triangle_chain_self}) to obtain
\[
W_1(P_S^F,P_T^F)
\le
W_1(P_S^{\tilde{F}},P_T^{\tilde{F}}) 
+
\mathbb{E}_{\mathbf{X}\sim\mathcal{D}_S}\|e(\mathbf{X})\|
+
\mathbb{E}_{\mathbf{X}\sim\mathcal{D}_T}\|e(\mathbf{X})\|,
\]
which completes the proof.
\end{proof}
\subsection{Proof of Estimation Error Bound on Static Factor}
\begin{theorem}
\label{prop:static_s2b_approxC1_full}
Let a sequence $\mathbf{X}$ follow the factorized modeling $\mathbf{x}_t = \mathbf{s} + \mathbf{u}_t, \ t=1,\ldots,T$.
Conditioned on the static factor $\mathbf{s}$, the dynamic factors $\{u_t\}_{t=1}^T$ satisfy $\mathbb{E}[u_t | s]=0$, and have a finite second moment.
Let $\mathcal{M}_2$ be the static stream of \emph{MetaTrans}, which is temporal permutation invariant. Further define $\bar{\mathbf{u}}:=\frac{1}{T}\sum_{t=1}^T \mathbf{u}_t$ and assume that each coordinate $u_{t,j}$ ($j=1,\ldots,d$) is sub-Gaussian with parameter $\sigma^2$ conditioned on $s$, i.e.,
\begin{equation} \nonumber
\mathbb{E}\big[\exp(\lambda u_{t,j})\mid \mathbf{s}\big]\le \exp\left(\frac{\sigma^2\lambda^2}{2}\right),\quad \forall \lambda\in\mathbb{R}.
\label{eq:subg_def}
\end{equation} 

Then for any $\delta\in(0,1)$, with probability at least $1-\delta$ (conditioned on $\mathbf{s}$),
\begin{equation}
\|\mathcal{M}_2(\mathbf{X})-s\|_2
\le
\varepsilon_{\mathrm{cal}}
+
L\sigma\sqrt{\frac{2d\log(2d/\delta)}{T}}.
\label{eq:full_hp}
\end{equation}
\end{theorem}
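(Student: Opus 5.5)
The plan is to split the error into a deterministic ``calibration'' part and a stochastic concentration part. The statement leaves $\varepsilon_{\mathrm{cal}}$ and $L$ unspecified, so I will make them precise through two modelling assumptions.

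\textbf{Assumption (a).} Because $\mathcal{M}_2$ is temporal permutation invariant (Theorem~\ref{Theorem-apx-1}), it depends on $\mathbf{X}$ only through the empirical measure of its frames $\{\mathbf{x}_t\}_{t=1}^T$. I will assume it is well approximated near the ``static'' input by a smooth function of the frame mean $\bar{\mathbf{x}}=\mathbf{s}+\bar{\mathbf{u}}$. Concretely, there is a map $g:\mathbb{R}^d\to\mathbb{R}^d$ with
\[
\sup_{\mathbf{X}}\|\mathcal{M}_2(\mathbf{X})-g(\bar{\mathbf{x}})\|_2\le \varepsilon_1 .
\]
This approximation is motivated by the final average operator and Lemma~\ref{lemma-apx-6}.

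\textbf{Assumption (b).} $g$ is $L$-Lipschitz and satisfies $\|g(\mathbf{s})-\mathbf{s}\|_2\le\varepsilon_2$. Then I set $\varepsilon_{\mathrm{cal}}:=\varepsilon_1+\varepsilon_2$.

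\textbf{Step 1: deterministic reduction.} The triangle inequality gives
\[
\|\mathcal{M}_2(\mathbf{X})-\mathbf{s}\|_2\le \|\mathcal{M}_2(\mathbf{X})-g(\bar{\mathbf{x}})\|_2+\|g(\mathbf{s}+\bar{\mathbf{u}})-g(\mathbf{s})\|_2+\|g(\mathbf{s})-\mathbf{s}\|_2 .
\]
The right-hand side is at most $\varepsilon_{\mathrm{cal}}+L\|\bar{\mathbf{u}}\|_2$. The task therefore reduces to a high-probability bound on $\|\bar{\mathbf{u}}\|_2$, conditioned on $\mathbf{s}$.

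\textbf{Step 2: concentration, coordinate by coordinate.} Here I need conditional independence of $\mathbf{u}_1,\dots,\mathbf{u}_T$ given $\mathbf{s}$. The statement does not state this explicitly, and I would add it as the standard i.i.d.\ frame-noise assumption. Under it, the MGF bound gives $\mathbb{E}[\exp(\lambda\bar u_j)\mid\mathbf{s}]\le\exp(\sigma^2\lambda^2/(2T))$, so $\bar u_j$ is sub-Gaussian with parameter $\sigma^2/T$. The Chernoff bound then yields
\[
\Pr\big(|\bar u_j|>\tau\mid\mathbf{s}\big)\le 2\exp\!\big(-T\tau^2/(2\sigma^2)\big).
\]

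\textbf{Step 3: union bound and norm conversion.} Choose $\tau=\sigma\sqrt{2\log(2d/\delta)/T}$ and take a union bound over the $d$ coordinates. With probability at least $1-\delta$, every coordinate satisfies $|\bar u_j|\le\tau$. On that event,
\[
\|\bar{\mathbf{u}}\|_2\le\sqrt d\,\tau=\sigma\sqrt{2d\log(2d/\delta)/T}.
\]
Multiplying by $L$ and substituting into Step 1 gives exactly Eq.~(\ref{eq:full_hp}).

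\textbf{Main obstacle.} The hard part is Step 1, specifically justifying the reduction of $\mathcal{M}_2$ to a Lipschitz function of the frame mean. Permutation invariance alone only gives dependence on the empirical measure, not on the mean. I would therefore state the reduction honestly as a calibration assumption and absorb the residual into $\varepsilon_{\mathrm{cal}}$. One heuristic supports it: attention at initialization, or with nearly uniform weights, averages tokens, so each token output is close to a function of $\bar{\mathbf{x}}$.

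Steps 2–3 are routine, apart from the conditional-independence assumption noted in Step 2. The only finite-second-moment use is to ensure the mean is well defined.
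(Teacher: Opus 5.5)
Your proposal is correct and follows essentially the same route as the paper. The paper imposes a calibration condition on constant sequences, $\|\mathcal{M}_2(\mathbf{s}\mathbf{1}^\top)-\mathbf{s}\|_2\le\varepsilon_{\mathrm{cal}}$, and a mean-stability condition, $\|\mathcal{M}_2(\mathbf{X})-\mathcal{M}_2(\mathbf{s}\mathbf{1}^\top)\|_2\le L\|\bar{\mathbf{x}}-\mathbf{s}\|_2$, to reach $\|\mathcal{M}_2(\mathbf{X})-\mathbf{s}\|_2\le\varepsilon_{\mathrm{cal}}+L\|\bar{\mathbf{u}}\|_2$; your (a)--(b) play that role, recovering it with $g(\mathbf{v})=\mathcal{M}_2(\mathbf{v}\mathbf{1}^\top)$ and $\varepsilon_1=0$, and the paper then uses the same MGF/Chernoff, union-bound and $\sqrt{d}$ steps, factorizing the MGF over $t$ and so implicitly using the conditional independence of the $\mathbf{u}_t$ given $\mathbf{s}$ that you rightly flag as missing from the statement.
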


\begin{proof}
Since $\mathcal{M}_2$ is temporal permutation invariant, it motivates the following conditions.

(C1) Approximate calibration on constant sequences: there exists $\varepsilon_{\mathrm{cal}}\ge 0$ such that
\begin{equation}
\|\mathcal{M}_2(\mathbf{s}\mathbf{1}^\top)-\mathbf{s}\|_2 \le \varepsilon_{\mathrm{cal}}.
\label{eq:full_C1prime}
\end{equation}

(C2) Mean-stability: there exists $L>0$ such that for all $X$ and all such $s$,
\begin{equation}
\|\mathcal{M}_2(\mathbf{X})-\mathcal{M}_2(\mathbf{s}\mathbf{1}^\top)\|_2
\le
L\left\|\frac{1}{T}\sum_{t=1}^T (\mathbf{x}_t-\mathbf{s})\right\|_2.
\label{eq:full_C2}
\end{equation}

We add and subtract $\mathcal{M}_2(\mathbf{s}\mathbf{1}^\top)$ and apply the triangle inequality,
\begin{equation}
\|\mathcal{M}_2(\mathbf{X})-\mathbf{s}\|_2 \le \|\mathcal{M}_2(\mathbf{X})-M_2(\mathbf{s}\mathbf{1}^\top)\|_2 + \|\mathcal{M}_2(s\mathbf{1}^\top)-\mathbf{s}\|_2.
\label{eq:full_tri}
\end{equation}
By (C2),
\[
\|\mathcal{M}_2(\mathbf{X})-\mathcal{M}_2(s\mathbf{1}^\top)\|_2
\le
L\left\|\frac1T\sum_{t=1}^T(\mathbf{x}_t-\mathbf{s})\right\|_2
=
L\left\|\frac1T\sum_{t=1}^T \mathbf{u}_t\right\|_2
=
L\|\bar{\mathbf{u}}\|_2.
\]
With (C1), substituting into Eq.~\eqref{eq:full_tri} yields the core bound
\begin{equation}
\|\mathcal{M}_2(\mathbf{X})-\mathbf{s}\|_2 \le L\|\bar{\mathbf{u}}\|_2 + \varepsilon_{\mathrm{cal}}.
\label{eq:full_core}
\end{equation}

Fix $j\in\{1,\ldots,d\}$ and define $\bar{u}_j:=\frac{1}{T}\sum_{t=1}^T u_{t,j}$.
Conditioned on $\mathbf{s}$, for any $\lambda\in\mathbb{R}$,
\begin{align}
\mathbb{E}\!\left[\exp(\lambda \bar{u}_j)\mid \mathbf{s}\right]
&=
\mathbb{E}\!\left[\exp\!\left(\frac{\lambda}{T}\sum_{t=1}^T u_{t,j}\right)\Bigm| \mathbf{s}\right]\nonumber\\
&=
\prod_{t=1}^T \mathbb{E}\!\left[\exp\!\left(\frac{\lambda}{T}u_{t,j}\right)\Bigm| \mathbf{s}\right]\nonumber\\
&\le
\prod_{t=1}^T \exp\!\left(\frac{\sigma^2}{2}\left(\frac{\lambda}{T}\right)^2\right)
=
\exp\!\left(\frac{\sigma^2\lambda^2}{2T}\right).
\label{eq:full_mgf}
\end{align}
Let $a>0$ and $\lambda>0$. By Markov's inequality,
\begin{align}
\Pr(\bar{u}_j\ge a\mid \mathbf{s})
&=
\Pr\big(\exp(\lambda \bar{u}_j)\ge \exp(\lambda a)\mid \mathbf{s}\big)\nonumber\\
&\le
\exp(-\lambda a)\,\mathbb{E}[\exp(\lambda \bar{u}_j)\mid \mathbf{s}]\nonumber\\
&\le
\exp\!\left(-\lambda a+\frac{\sigma^2\lambda^2}{2T}\right).
\label{eq:full_markov}
\end{align}
We minimize the exponent in Eq.~\eqref{eq:full_markov} over $\lambda>0$:
\[
\psi(\lambda):=-\lambda a+\frac{\sigma^2\lambda^2}{2T},
\qquad
\psi'(\lambda)=-a+\frac{\sigma^2}{T}\lambda.
\]
Setting $\psi'(\lambda)=0$ yields $\lambda^\star=\frac{Ta}{\sigma^2}$, and substituting back gives
\[
\psi(\lambda^\star)
=
-\frac{Ta^2}{\sigma^2}+\frac{\sigma^2}{2T}\cdot\frac{T^2a^2}{\sigma^4}
=
-\frac{Ta^2}{2\sigma^2}.
\]
Therefore,
\begin{equation}
\Pr(\bar{u}_j\ge a\mid \mathbf{s})\le \exp\!\left(-\frac{Ta^2}{2\sigma^2}\right).
\label{eq:full_one_sided}
\end{equation}
Applying the same bound to $-\bar{u}_j$ gives $\Pr(\bar{u}_j\le -a\mid \mathbf{s})\le \exp(-Ta^2/(2\sigma^2))$, hence
\begin{equation}
\Pr(|\bar{u}_j|\ge a\mid \mathbf{s})\le 2\exp\!\left(-\frac{Ta^2}{2\sigma^2}\right).
\label{eq:full_two_sided}
\end{equation}

Consider the Union bound for $\|\bar{\mathbf{u}}\|_\infty$.
Let $\|\bar{\mathbf{u}}\|_\infty:=\max_{1\le j\le d'}|\bar{u}_j|$. Then
\begin{align}
\Pr(\|\bar{\mathbf{u}}\|_\infty\ge a\mid \mathbf{s})
&=
\Pr\!\left(\bigcup_{j=1}^{d'}\{|\bar{u}_j|\ge a\}\Bigm| \mathbf{s}\right)\nonumber\\
&\le
\sum_{j=1}^{d'} \Pr(|\bar{u}_j|\ge a\mid \mathbf{s})
\le
2d'\exp\!\left(-\frac{Ta^2}{2\sigma^2}\right).
\label{eq:full_union}
\end{align}

Now convert $\ell_\infty$ to $\ell_2$.
For any $\mathbf{v}\in\mathbb{R}^{d}$, $\|\mathbf{v}\|_2\le \sqrt{d}\|\mathbf{v}\|_\infty$. Hence the event inclusion holds:
\[
\{\|\bar{\mathbf{u}}\|_2\ge \varepsilon\}\subseteq \left\{\|\bar{\mathbf{u}}\|_\infty\ge \varepsilon/\sqrt{d}\right\}.
\]
Applying Eq.~\eqref{eq:full_union} with $a=\varepsilon/\sqrt{d}$ yields
\begin{equation}
\Pr(\|\bar{\mathbf{u}}\|_2\ge \varepsilon\mid \mathbf{s})
\le
2d\exp\!\left(-\frac{T(\varepsilon^2/d)}{2\sigma^2}\right)
=
2d\exp\!\left(-\frac{T\varepsilon^2}{2d\sigma^2}\right).
\label{eq:full_u_l2_tail}
\end{equation}
Now choose $\varepsilon$ such that the right-hand side equals $\delta$:
\[
2d\exp\!\left(-\frac{T\varepsilon^2}{2d\sigma^2}\right)=\delta
\quad\Longleftrightarrow\quad
\varepsilon
=
\sigma\sqrt{\frac{2d\log(2d/\delta)}{T}}.
\]
Then Eq.~\eqref{eq:full_u_l2_tail} becomes
\begin{equation}
\Pr\!\left(\|\bar{\mathbf{u}}\|_2\ge \sigma\sqrt{\frac{2d\log(2d/\delta)}{T}}\Bigm| \mathbf{s}\right)\le \delta.
\label{eq:full_u_l2_hp}
\end{equation}
Finally, combining Eq.~\eqref{eq:full_u_l2_hp} with Eq.~\eqref{eq:full_core} proves Eq.~\eqref{eq:full_hp}.
\end{proof}

\paragraph{Why permutation invariance motivates C1 and C2.}
Because $\mathcal{M}_2$ uses a Transformer encoder without positional embeddings and pools over time, it is invariant to temporal
permutations: $\mathcal{M}_2(\mathbf{X}\mathbf{P}_\pi)=\mathcal{M}_2(\mathbf{X})$. This invariance places $\mathcal{M}_2$ in the correct symmetry class for estimating a static
(time-order–independent) nuisance, but invariance alone does not enforce correctness or stability. We therefore impose two
additional properties. C1 is a calibration condition requiring $\mathcal{M}_2$ to act as (approximate) identity on constant
sequences $\mathbf{s}\mathbf{1}^\top$, corresponding to the ``no-dynamics'' case. C2 is a mean-stability
condition stating that deviations from a constant sequence affect the output primarily through the average perturbation, which is the canonical permutation-invariant statistic that concentrates at rate
$O(1/\sqrt{T})$ under i.i.d.\ mean-zero fluctuations. Together, permutation invariance + C1 + C2 formalize that $\mathcal{M}_2$
implements a stable static estimator.

\section{Experimental Results}

\subsection{Complexity Analysis}
We conduct complexity analysis on \textit{MetaTrans}. 
We compare it with TA$^3$N, CO$^2$A, CoMix, and \textit{TranSVAE} on the number of trainable parameters, multiply-accumulate operations (MACs), floating-point operations (FLOPs), and inference frame-per-second (FPS). 
We show comparison results in Table \ref{tab.inference}. 
It can be seen that \textit{MetaTrans} requires less trainable parameters than \textit{TranSVAE}, CO$^2$A, and CoMix.
Regarding MACs and FLOPs, they are competitive among different methods due to the usage of the same I3D backbone.
For UNITE, it is based on Vit-B/16 with 3 training stages using 4 GPU, and thus is more computationally expensive.
In FPS evaluations, \textit{MetaTrans} outperforms all baselines, demonstrating superior efficiency with fewer training runs and the fastest inference speeds.

We also compare the convergence speed between \textit{MetaTrans} and the latest \textit{TranSVAE} on the UCF-HMDB dataset.
The learning curves of the two methods are shown in Figure \ref{fig.learning_curve}.
As observed, \textit{MetaTrans} converges more rapidly than \textit{TranSVAE}, necessitating fewer learning epochs. 
This speed discrepancy can be attributed to the simpler architecture of \textit{MetaTrans}, which involves only two primary loss terms, as opposed to \textit{TranSVAE}'s balancing act with five terms, including MSE and KL-divergence losses.
This observation underscores the training efficiency advantage of \textit{MetaTrans}.
\input{Tables/Complexity}
\begin{figure}[t]
\begin{center}
\includegraphics[width=0.7\textwidth]{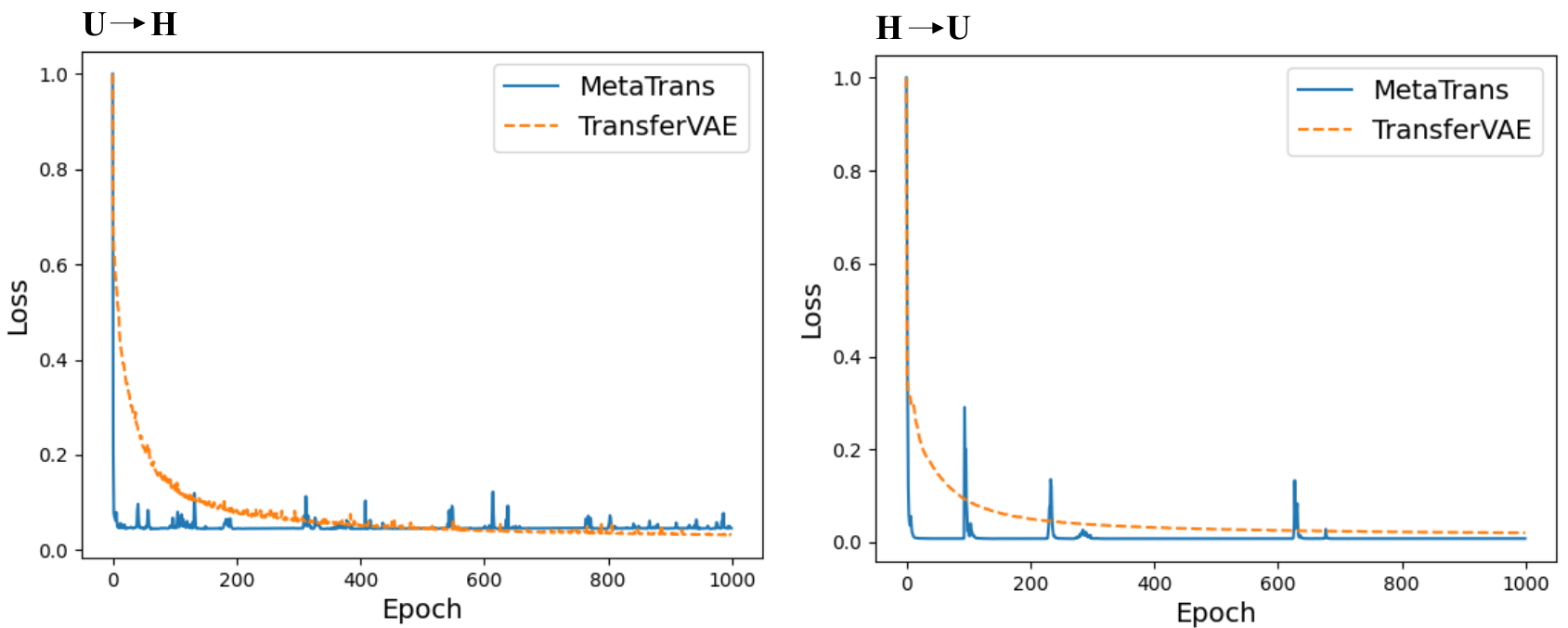}
\end{center}
\caption{Learning curves on UCF-HMDB.}
\label{fig.learning_curve}
\vspace{-0.4cm}
\end{figure}

\subsection{Ablation Study.} 
We further conduct ablation study on the Epic-Kitchens dataset.
Recall the two \textit{MetaTrans} variants: \textit{MetaTrans}$\_wo\_sub$, which solely focuses on temporal alignment using adversarial loss without the subtraction operation, and \textit{MetaTrans}$\_wo\_adv$, which exclusively removes spatial domain divergence through the subtraction operation without employing adversarial loss. 
Comparison results of these variants with \textit{MetaTrans} on Epic-Kitchens dataset are presented in Table \ref{tab.alabltaion_ek}. 
As can be seen, both variants exhibit improvements over $\mathcal{S}_{\text{only}}$, highlighting the effectiveness of individual spatial and temporal divergence removal within \textit{MetaTrans}.
Furthermore, the integration of both, as in \textit{MetaTrans}, significantly enhances adaptation performance.
\input{Tables/abaltion_ek}

\subsection{Permutation Invariance Analysis.} 
We further conduct permutation invariance analysis on the Epic-Kitchens dataset.
The two alternatives follow the definition in the main paper, i.e., \textit{MetaTrans}$\_fs\_pooling$ and \textit{Bi-LSTM}$\_pooling$.
Comparison results are presented in Table \ref{tab.apermutation_ek}. 
As can be seen, both variants outperform $\mathcal{S}_{\text{only}}$, yet neither matches the performance of \textit{MetaTrans}.
\input{Tables/permutation_ek}

\subsection{Sensitivity Analysis}
\emph{MetaTrans} only contains one importance weight, i.e., $\lambda_1$. 
Following the common protocol in UVDA, we conduct the grid search from $0.01$ to $0.1$ with $0.01$ as the step on the validation set for the optimal value selection.
The sensitivity analysis results are shown in Table \ref{tab.lambda}.
As evident, incorporating adversarial loss consistently yields superior results compared to its absence, which shows the necessity of reducing temporal domain divergence in UVDA.
Moreover, we find that \emph{MetaTrans} achieves the best result when $\lambda_1$ is $0.03$ and $0.07$ for $\mathbf{H} \to \mathbf{U}$ and $\mathbf{U} \to \mathbf{H}$, respectively. 
We thus set these values for our experiments.
\begin{table}[h]
\centering
\caption{Sensitivity analysis on $\lambda_1$.}
\label{tab.lambda}
\renewcommand{\arraystretch}{1.1}
\scalebox{0.9}{
\begin{tabular}{c|ccccccccccc}
\toprule
$\lambda_1$                & 0     & 0.01 & 0.02 & 0.03 &0.04 & 0.05  &0.06 &0.07 &0.08 &0.09 & 0.1      \\ \midrule
$\mathbf{H} \to \mathbf{U}$ & 93.9 & 94.7 & 96.3 & \textbf{99.0} & 96.6 & 95.4 & 94.7 & 96.3 & 95.7 & 93.9& 94.7 \\ 
$\mathbf{U} \to \mathbf{H}$ & 82.5 & 85.3 & 85.3 & 86.1  & 90.6 & 88.1 & 89.2 & \textbf{92.2} & 88.7 & 86.1 & 90.6 \\  \bottomrule
\end{tabular}}
\end{table}

\subsection{Additional \emph{RGRA} Results}
In Eq. (\ref{RGRA_exp}) of the main paper, a strategy of searching one by fixing the other weights is used in the definition of \emph{RGRA}.
Herein, we explore the greedy search strategy. 
Consequently, \emph{RGRA} is formulated in the following:
\begin{equation} \label{RGRA_sup} 
RGRA = \frac{A_{opt} - A_{\mathcal{S}_{only}}}{A_{\mathcal{T}_{sup}}-A_{\mathcal{S}_{only}}} * \frac{1}{10^{(N_{loss}-1)}}.
\end{equation}
The corresponding comparison results of \emph{RGRA} are shown in Table \ref{sup.rgra}.
As can be seen, the \emph{RGRA} of \emph{MetaTrans} remains the same as presented in Table \ref{tab.rgra} of the main manuscript as only one loss is used in \emph{MetaTrans}.
However, for other methods employing multiple losses, the use of the greedy search strategy further diminishes the return over investment of the method. 
Particularly, \emph{TranSVAE} emerges as the least efficient baseline due to the largest number of losses utilized. 
This series of experiments further corroborates the exceptional superiority of \emph{MetaTrans} over existing baselines in terms of efficiency.
\begin{table*}[h]
\centering
\caption{\emph{RGRA} (as in Eq. (\ref{RGRA_sup})) comparison results (\%) on UCF-HMDB and Epic-Kitchens.}
\label{sup.rgra}
\renewcommand{\arraystretch}{1}
\resizebox{\linewidth}{!}{
\begin{tabular}{c|c|ccccccc|ccc}
\toprule
\textbf{Method} & $N$ $\downarrow$ & $\mathbf{P08} \to \mathbf{P01}$ & $\mathbf{P08}\to\mathbf{P22}$ & $\mathbf{P01} \to \mathbf{P08}$ & $\mathbf{P01} \to \mathbf{P22}$ & $\mathbf{P22} \to \mathbf{P08}$ & $\mathbf{P22} \to \mathbf{P01}$ & \textbf{Average}$\uparrow$ & $\mathbf{U} \to \mathbf{H}$ & $\mathbf{H} \to \mathbf{U}$ & \textbf{Average}$\uparrow$ \\ \midrule
{DANN}       & $2$ & $1.57$ &$ 0.84$ & $1.34$ & $1.14$ & $1.88$ & $2.23$ & $1.49$ & $0.38$ & $-0.86$ & $-0.06$ \\ 
{ADDA}       & $2$ & $0.83$ & $0.27$ & $0.42$ & $0.69$ & $0.67$ & $1.99$ & $0.80$ & $-0.75$ & $-0.43$ & $-0.63$ \\ 
{TA3N}       & $3$ & $0.04$ & $0.11$ & $0.25$ & $0.15$ & $0.24$ & $0.30$ & $0.18$ & $0.08$ & $0.22$ & $0.13$ \\ 
{MM-SADA}    & $3$ & $0.39$ & $0.19$ & $0.29$ & $0.28$ & $0.33$ & $0.37$ & $0.33$ & $0.27$ & $0.29$ & $0.28$ \\ 
{STCDA}      & $3$ & $0.48$ & $0.24$ & $0.42$ & $0.35$ & $0.29$ & $0.34$ & $0.35$ & $0.19$ & $0.41$ & $0.27$ \\ 
{CoMix}      & $3$ & $0.32$ & $0.23$ & $0.15$ & $0.25$ & $0.34$ & $0.48$ & $0.30$ & $0.43$ & $0.63$ & $0.50$ \\ 
{MixDANN}    & $3$ & $0.49$ & $0.30$ & $0.37$ & $0.20$ & $0.38$ & $0.39$ & $0.36$ & $-0.19$ & $-0.28$ & $-0.22$ \\ 
{CLDA}       & $3$ & $0.38$ & $0.23$ & $0.43$ & $0.27$ & $0.27$ & $0.40$ & $0.33$ & $0.57$ & $0.97$ & $0.71$ \\ 
{DFRA}       & $4$ & $0.04$ & $0.03$ & $0.06$ & $0.04$ & $0.04$ & $0.05$ & $0.04$ & $0.06$ & $0.10$ & $0.07$  \\ 
{TranSVAE}   & $5$ & $0.01$ & $0.01$ & $0.01$ & $0.01$ & $0.01$ & $0.01$ & $0.01$ & $0.01$ & $0.01$ & $0.01$ \\  \midrule
\cellcolor{LightCyan}{MetaTrans} & \cellcolor{LightCyan}$2$ & \cellcolor{LightCyan}$\mathbf{4.87}$ & \cellcolor{LightCyan}$\mathbf{5.51}$ & \cellcolor{LightCyan}$\mathbf{5.56}$ & \cellcolor{LightCyan}$\mathbf{7.11}$ & \cellcolor{LightCyan}$\mathbf{6.21}$ & \cellcolor{LightCyan}$\mathbf{6.84}$ & \cellcolor{LightCyan}$\mathbf{6.02}$ & \cellcolor{LightCyan}$\mathbf{8.11}$ & \cellcolor{LightCyan}$\mathbf{12.59}$ & \cellcolor{LightCyan}$\mathbf{10.35}$    \\ \bottomrule
\end{tabular}
}
\end{table*}

\section{Remarks on \emph{MetaTrans}.}
For \emph{MetaTrans}, we want to highlight the following remarks.
\begin{itemize}
\item Visual features, with and without positional embeddings, undergo processing by same self-attention blocks to maintain consistency. Essentially, $\mathcal{M}_1$ and $\mathcal{M}_2$ have identical model structure.%, differing only in whether they encode positional information or not.
\item The permutation-invariant nature is pivotal. While there are alternative model architectures to extract static feature representation from sequences, such as LSTM with average pooling, these architectures cannot ensure temporal permutation-invariance. 
\item \emph{MetaTrans} leverages the robust temporal representation learning capability of Transformer w. positional embedding, and the temporal permutation-invariant nature of a Transformer variant w.o. positional embedding, to learn the temporal feature representation and the static feature representation, respectively. This is different from either the
disentangled idea in \emph{TranSVAE} \cite{wei2023unsupervised} that uses information bottleneck to achieve the static and dynamic separation, or separate encoder structures for human and context information in HCT \cite{lin2024human}.   
\item Note that \emph{TranSVAE} uses a static consistency constraint to approximate permutation invariance, but this constraint is limited to a single random permutation. In contrast, with Theorem \ref{Theorem1}, the temporal permutation-invariance of \emph{MetaTrans} holds for any permutation.
\item \emph{MetaTrans} is more than an application of attention blocks to UVDA. It embodies a streamlined design grounded in a theoretically-justified idea: the effective management of spatial and temporal divergence by the extraction of high-quality static and temporal feature representations, all accomplished with a simple objective function.
\end{itemize}

\section{Limitation and Future Work.} 
We explore the limitations of \textit{MetaTrans}, which can point towards promising future directions. 
Firstly, \textit{MetaTrans} relies on a conventional self-attention architecture with standard positional embeddings. 
Exploring more advanced attention-based variants might significantly boost its performance. Secondly, alternative temporal permutation-invariant models could exist, and integrating them with our proposed decoupled adaptation approach could be fruitful. 
Thirdly, \textit{MetaTrans} could be strengthened by considering additional factors such as incorporating multi-modality inputs, leveraging advanced backbone features like videomae, and evaluating its performance in multi-source adaptation scenarios as well as other video adaptation tasks. 
These unexplored avenues hold the potential to provide valuable insights for future research.

\end{document}

%% file: Tables/UCF-HMDB_comparison_results.tex
\begin{table}[t]
\centering
\caption{Comparison results on UCF-HMDB.}
\label{tab.ucf-hmdb}
\resizebox{\linewidth}{!}{
\begin{tabular}{r|c|c|c|c}
\toprule
\textbf{Method \& Year} & \textbf{Backbone} & $\mathbf{U} \to \mathbf{H}$ & $\mathbf{H} \to \mathbf{U}$ & \textbf{Average} $\uparrow$
\\\midrule
DANN (JMLR'16)       & ResNet-101 & $75.3$ & $76.4$ & $75.8$ \\
JAN  (ICML'17)       & ResNet-101 & $74.7$ & $76.7$ & $75.7$ \\
AdaBN (PR'18)        & ResNet-101 & $72.2$ & $77.4$ & $74.8$ \\
MCD (CVPR'18)        & ResNet-101 & $73.9$ & $79.3$ & $76.6$ \\
ABG (MM'20)          & ResNet-101 & $79.2$ & $85.1$ & $82.1$ \\
TCoN (AAAI'20)       & ResNet-101 & $87.2$ & $89.1$ & $88.2$ \\
MA$^2$L-TD (WACV'22) & ResNet-101 & $85.0$ & $86.6$ & $85.8$ \\
\midrule
\cellcolor{red!6}Source-only ($\mathcal{S}_{\text{only}}$) & \cellcolor{red!6}I3D & \cellcolor{red!6}$80.3$ & \cellcolor{red!6}$88.8$ & \cellcolor{red!6}$84.5$ \\
\midrule
DANN (JMLR'16)     & I3D & $80.8$ & $88.1$ & $84.5$ \\
ADDA (CVPR'17)     & I3D & $79.2$ & $88.4$ & $83.8$ \\
TA$^3$N (ICCV'19)  & I3D & $81.4$ & $90.5$ & $86.0$ \\
SAVA (ECCV'20)     & I3D & $82.2$ & $91.2$ & $86.7$ \\ 
MM-SADA (CVPR' 20)     & I3D & $84.2$ & $91.1$ & $87.7$ \\
STCDA (CVPR' 21)     & I3D & $83.1$ & $92.1$ & $87.6$ \\
CoMix (NeurIPS'21) & I3D & $86.7$ & $93.9$ & $90.2$ \\
CO$^2$A (WACV'22)  & I3D & $87.8$ & $95.8$ & $91.8$ \\
MixDANN (MM' 22) & I3D & $77.5$ & $86.5$ & $82.0$ \\
STHC (CVPR' 23) & I3D & $90.9$ & $92.1$ & $91.5$ \\
DALLA-V (ICCV' 23) & I3D & $88.9$ & $93.1$ & $91.0$ \\
DFRA (JNC' 23) & I3D & $88.6$ & $96.7$ & $92.6$ \\
TranSVAE (NeurIPs' 23) & I3D & $87.8$ & $\mathbf{99.0}$ & $93.4$ \\
EXTERN (TMLR' 24) & I3D & $88.9$ & $91.9$ & $90.4$ \\
UNITE (CVPR' 24) & I3D & $\mathbf{92.5}$ & $95.0$ & $\emph{\textbf{93.8}}$ \\
TAViT (DAS' 25) & I3D & $90.7$ & $95.4$ & $93.0$ \\
Co-STAR (Arxiv' 25) & I3D & $\emph{\textbf{92.4}}$ & $94.1$ & $92.6$ \\
CleanAdapt (PR' 25) & I3D & $88.6$ & $96.7$ & $92.6$ \\
\midrule
\cellcolor{LightCyan}{MetaTrans (Ours)} & \cellcolor{LightCyan}{I3D} & \cellcolor{LightCyan}$92.2$ & \cellcolor{LightCyan}${\textbf{99.0}}$ & \cellcolor{LightCyan}$\mathbf{95.4}$ \\
\midrule
\cellcolor{gray!6}Supervised-target ($\mathcal{T}_{\text{sup}}$) & \cellcolor{gray!6}I3D & \cellcolor{gray!6}$95.0$ & \cellcolor{gray!6}$96.9$ & \cellcolor{gray!6}$95.9$ \\
\bottomrule
\end{tabular}
}
\vspace{-0.3cm}
\end{table}

%% file: Tables/epic-kitchens-comparison.tex
\begin{table*}[t]
\centering
\caption{Comparison results on Epic-Kitchens.}
\label{tab.jester-and-kitchens}
\renewcommand{\arraystretch}{1}
\resizebox{\linewidth}{!}{
\begin{tabular}{r|c|cccccc|c}
\toprule
\textbf{Method \& Year} & \textbf{Backbone} & $\mathbf{P08}\to\mathbf{P01}$ & $\mathbf{P08}\to\mathbf{P22}$ & $\mathbf{P01}\to\mathbf{P08}$ & $\mathbf{P01}\to\mathbf{P22}$ & $\mathbf{P22}\to\mathbf{P08}$ & $\mathbf{P22}\to\mathbf{P01}$ & \textbf{Average} $\uparrow$
\\\midrule
\cellcolor{red!6} Source-only ($\mathcal{S}_{\text{only}}$) & \cellcolor{red!6} I3D & \cellcolor{red!6}$32.8$ & \cellcolor{red!6} $34.1$ & \cellcolor{red!6} $35.4$  & \cellcolor{red!6} $39.1$  & \cellcolor{red!6}${34.6}$ & \cellcolor{red!6}$35.8$ & \cellcolor{red!6}$35.3$
\\
\midrule
DANN (JMLR'16) & I3D & $37.7$ & $36.6$ & $38.3$  & $41.9$ & $38.8$  & $42.1$ & $39.2$
\\
ADDA (CVPR'17) & I3D & $35.4$ & $34.9$ & $36.3$  & $40.8$ & $36.1$  & $41.4$ & $37.4$
\\
TA$^3$N (ICCV'19) & I3D & $34.2$ & $37.4$ & $40.9$  & $42.8$ & $39.9$ & $44.2$ & $39.9$
\\
MM-SADA (CVPR' 20) & I3D & $45.0$ & $39.7$ & $41.7$  & $46.1$ & $42.1$ & $46.1$ & $43.9$ 
\\
STCDA (CVPR' 21) & I3D & $47.7$ & $41.2$ & $44.4$  & $47.6$ & $41.1$ & $45.5$ & $44.6$
\\
CoMix (NeurIPS'21) & I3D & $42.9$ & $40.9$ & $38.6$  & $45.2$ & $42.3$ & $49.2$ & $43.2$
\\
MixDANN (MM' 22) & I3D &$\emph{\textbf{48.0}}$ & $43.0$ & $43.4$  & $43.9$ & $43.0$ & $46.9$ & $44.7$
\\
DFRA (JNC' 23) & I3D & $46.0$ & $43.8$ & $48.0$  & $48.7$ & $43.7$ & $49.2$ & $46.6$
\\
TranSVAE (NeurIPs' 23) & I3D & $\mathbf{50.5}$ & $50.3$ & $\mathbf{50.3}$  & $\mathbf{58.6}$ & $48.0$ & $\mathbf{58.0}$ & $\mathbf{52.6}$
\\
STDN (NeurIPs' 23) & I3D & $40.5$ & $38.6$ & $38.5$  & $44.0$ & $40.4$ & $47.2$ & $41.6$
\\
C-RNA (IJCV' 24) & I3D & $43.1$ & $48.9$ & $43.1$  & $41.7$ & $\mathbf{49.6}$ & $41.9$ & $44.7$
\\
HCT (TPAMI' 24) & I3D & $45.0$ & $43.3$ & $43.2$  & $47.6$ & $46.2$ & $46.4$ & $45.3$
\\
CleanAdapt (PR' 25) & I3D &$44.5$ & $40.9$ & $44.6$  & $45.7$ & $40.7$ & $47.1$ & $43.9$
\\
MCT (JMM' 25) & I3D & $43.2$ & $\mathbf{51.4}$ & $\emph{\textbf{48.3}}$  & $40.8$ & $48.3$ & $44.6$ & $46.1$
\\
\midrule
\cellcolor{LightCyan}{MetaTrans (Ours)} & I3D & \cellcolor{LightCyan}${\emph{\textbf{48.0}}}$ & \cellcolor{LightCyan}$\emph{\textbf{50.4}}$ & \cellcolor{LightCyan}$47.4$  & \cellcolor{LightCyan}$\emph{\textbf{56.6}}$ & \cellcolor{LightCyan}$\emph{\textbf{48.5}}$  & \cellcolor{LightCyan}$\emph{\textbf{55.1}}$ & \cellcolor{LightCyan}$\emph{\textbf{51.0}}$
\\
\midrule
\cellcolor{gray!6}Supervised-target ($\mathcal{T}_{\text{sup}}$) & I3D & \cellcolor{gray!6}$64.0$ & \cellcolor{gray!6}$63.7$ & \cellcolor{gray!6}$57.0$ & \cellcolor{gray!6}$63.7$ & \cellcolor{gray!6}$57.0$ & \cellcolor{gray!6}$64.0$ & \cellcolor{gray!6}$61.5$\\
\bottomrule
\end{tabular}
}
\end{table*}

%% file: Tables/RGRA-comparison.tex
\begin{table*}[t]
\centering
\caption{\emph{RGRA} (as in Eq. (\ref{RGRA_exp})) comparison results (\%) on UCF-HMDB and Epic-Kitchens.}
\vspace{-0.1cm}
\label{tab.rgra}
\renewcommand{\arraystretch}{1}
\resizebox{\linewidth}{!}{
\begin{tabular}{c|c|ccccccc|ccc}
\toprule
\textbf{Method} & $N$ $\downarrow$ & $\mathbf{P08} \to \mathbf{P01}$ & $\mathbf{P08}\to\mathbf{P22}$ & $\mathbf{P01} \to \mathbf{P08}$ & $\mathbf{P01} \to \mathbf{P22}$ & $\mathbf{P22} \to \mathbf{P08}$ & $\mathbf{P22} \to \mathbf{P01}$ & \textbf{Average}$\uparrow$ & $\mathbf{U} \to \mathbf{H}$ & $\mathbf{H} \to \mathbf{U}$ & \textbf{Average}$\uparrow$ \\ \midrule
{DANN}       & $2$ & $1.57$ &$ 0.84$ & $1.34$ & $1.14$ & $1.88$ & $2.23$ & $1.49$ & $0.38$ & $-0.86$ & $-0.06$ \\ 
{ADDA}       & $2$ & $0.83$ & $0.27$ & $0.42$ & $0.69$ & $0.67$ & $1.99$ & $0.80$ & $-0.75$ & $-0.43$ & $-0.63$ \\ 
{TA3N}       & $3$ & $0.22$ & $0.56$ & $1.27$ & $0.75$ & $1.18$ & $1.49$ & $0.88$ & $0.38$ & $1.08$ & $0.63$ \\ 
{MM-SADA}    & $3$ & $1.96$ & $0.95$ & $1.46$ & $1.42$ & $1.67$ & $1.83$ & $1.64$ & $1.34$ & $1.44$ & $1.38$ \\ 
{STCDA}      & $3$ & $2.39$ & $1.20$ & $2.08$ & $1.73$ & $1.45$ & $1.72$ & $1.77$ & $0.97$ & $2.07$ & $1.37$ \\ 
{CoMix}      & $3$ & $1.62$ & $1.15$ & $0.74$ & $1.24$ & $1.72$ & $2.38$ & $1.51$ & $2.17$ & $3.13$ & $2.50$ \\ 
{MixDANN}    & $3$ & $2.44$ & $1.50$ & $1.85$ & $0.98$ & $1.88$ & $1.97$ & $1.79$ & $-0.94$ & $-1.39$ & $-1.10$ \\ 
{CLDA}       & $3$ & $1.88$ & $1.15$ & $2.13$ & $1.34$ & $1.36$ & $2.00$ & $1.64$ & $2.83$ & $4.86$ & $3.57$ \\ 
{DFRA}       & $4$ & $1.41$ & $1.09$ & $1.94$ & $1.30$ & $1.35$ & $1.58$ & $1.44$ & $1.89$ & $3.24$ & $2.38$  \\ 
{TranSVAE}   & $5$ & $1.42$ & $1.37$ & $1.72$ & $1.98$ & $1.50$ & $1.97$ & $1.65$ & $1.27$ & $3.13$ & $1.94$ \\ 
{HCT}   & $5$ & $0.98$ & $0.78$ & $0.90$ & $0.86$ & $1.29$ & $0.94$ & $0.95$ & $2.26$ & $2.10$ & $2.21$ \\
\midrule
\cellcolor{LightCyan}{MetaTrans} & \cellcolor{LightCyan}$2$ & \cellcolor{LightCyan}$\mathbf{4.87}$ & \cellcolor{LightCyan}$\mathbf{5.51}$ & \cellcolor{LightCyan}$\mathbf{5.56}$ & \cellcolor{LightCyan}$\mathbf{7.11}$ & \cellcolor{LightCyan}$\mathbf{6.21}$ & \cellcolor{LightCyan}$\mathbf{6.84}$ & \cellcolor{LightCyan}$\mathbf{6.02}$ & \cellcolor{LightCyan}$\mathbf{8.11}$ & \cellcolor{LightCyan}$\mathbf{12.59}$ & \cellcolor{LightCyan}$\mathbf{10.35}$    \\ \bottomrule
\end{tabular}
}
\end{table*}

%% file: Tables/Ablation_v2.tex
\begin{table}[t]
\centering
\caption{Ablation study.}
\label{tab.divergence}
\renewcommand{\arraystretch}{1}
\resizebox{\linewidth}{!}{
\begin{tabular}{c|c|ccc}
\toprule
\textbf{Model Variants} & \textbf{Backbone} & {$\mathbf{U} \to \mathbf{H}$} & {$\mathbf{H} \to \mathbf{U}$} & \textbf{Average}$\uparrow$ \\  \midrule
\cellcolor{red!6}Source-only ($\mathcal{S}_{\text{only}}$) & \cellcolor{red!6}I3D  & \cellcolor{red!6}$80.3$  & \cellcolor{red!6}$88.8$ &\cellcolor{red!6}$84.5$   \\  \midrule
MetaTrans\_wo\_sub &I3D & $84.5$  & $92.8$ & $88.7$  \\ 
MetaTrans\_wo\_adv &I3D & $86.3$  & $95.1$ & $90.7$ \\  \midrule
\cellcolor{LightCyan}MetaTrans  &\cellcolor{LightCyan}I3D & \cellcolor{LightCyan}$\mathbf{92.2}$  & \cellcolor{LightCyan}$\mathbf{99.0}$ &\cellcolor{LightCyan}$\mathbf{95.4}$         \\ \bottomrule
\end{tabular}
}
\vspace{-0.35cm}
\end{table}

%% file: Tables/Permutation.tex
\begin{table}[t]
\centering
\caption{Permutation-invariant analysis.}
\label{tab.Architecture}
\renewcommand{\arraystretch}{1}
\resizebox{\linewidth}{!}{
\begin{tabular}{c|c|ccc}
\toprule
\textbf{Model} & \textbf{Backbone} & {$\mathbf{U} \to \mathbf{H}$} & {$\mathbf{H} \to \mathbf{U}$} & \textbf{Average} $\uparrow$   \\ \midrule
\cellcolor{red!6}Source-only ($\mathcal{S}_{\text{only}}$) & \cellcolor{red!6}I3D  & \cellcolor{red!6}$80.3$  & \cellcolor{red!6}$88.8$ &\cellcolor{red!6}$84.5$   \\  \midrule
{Bi-LSTM}$\_pooling$ & I3D  & $85.8$  & $94.6$ & $90.2$  \\ 
{MetaTrans}$\_fs\_pooling$ & I3D   & $84.7$  &  $93.9$  & $89.3$ \\ \midrule
\cellcolor{LightCyan} MetaTrans & \cellcolor{LightCyan}I3D   & \cellcolor{LightCyan}$\mathbf{92.2}$  &  \cellcolor{LightCyan}$\mathbf{99.0}$  & \cellcolor{LightCyan}$\mathbf{95.4}$ \\
\bottomrule
\end{tabular}
}
\end{table}

%% file: Tables/Complexity.tex
\begin{table}[t]
\centering
\caption{Model complexity analysis.}
\label{tab.inference}
\renewcommand{\arraystretch}{1}
\scalebox{0.9}{
\begin{tabular}{c|cccc}
\toprule
\textbf{Methods}  & \textbf{Trainable Params} & \textbf{MACs} & \textbf{FLOPs} & \textbf{FPS}
\\\midrule
TA$^3$N  & $7.69$ M & $18.2318$ G & $36.4636$ G & $0.0134$ s 
\\
CoMix  & $30.37$ M & $18.5640$ G & $37.1280$ G & $0.0157$ s  
\\
CO$^2$A & $23.67$ M & $18.1884$ G & $36.3768$ G & $0.0127$ s 
\\
TranSVAE & $12.74$ M & $18.2657$ G & $36.5314$ G & $0.0133$ s 
\\
UNITE & $86.00$ M & $27.7078$ G & $55.4156$ G & $0.0164$ s 
\\\midrule
\cellcolor{LightCyan}{MetaTrans} & \cellcolor{LightCyan}$8.67$ M & \cellcolor{LightCyan}$18.2458$ G & \cellcolor{LightCyan}$36.4916$ G & \cellcolor{LightCyan}$0.0091$ s 
\\\bottomrule
\end{tabular}}
\end{table}

%% file: Tables/abaltion_ek.tex
\begin{table}[h]
\centering
\caption{Ablation study on the Epic-Kitchens dataset.}
\label{tab.alabltaion_ek}
\renewcommand{\arraystretch}{1}
\scalebox{0.8}{
\begin{tabular}{c|cccccc|c}
\toprule
\textbf{Model Variants} &
$\mathbf{P08}\to\mathbf{P01}$ & $\mathbf{P08}\to\mathbf{P22}$ & $\mathbf{P01}\to\mathbf{P08}$ & $\mathbf{P01}\to\mathbf{P22}$ & $\mathbf{P22}\to\mathbf{P08}$ & $\mathbf{P22}\to\mathbf{P01}$ & \textbf{Average} $\uparrow$
\\  \midrule
\cellcolor{red!6}Source-only ($\mathcal{S}_{\text{only}}$) & \cellcolor{red!6}$32.8$  & \cellcolor{red!6}$34.1$  & \cellcolor{red!6}$35.4$ &\cellcolor{red!6}$39.1$  &\cellcolor{red!6}$34.6$ &\cellcolor{red!6}$35.8$ &\cellcolor{red!6}$35.3$ \\  \midrule
MetaTrans\_wo\_sub &$42.1$ & $45.6$  & $44.6$ & $53.8$  & $43.9$ & $51.7$ & $47.0$ \\ 
MetaTrans\_wo\_adv &$42.4$ & $43.1$  & $42.5$ & $53.6$ & $44.8$ & $51.7$ & $46.4$\\  \midrule
\cellcolor{LightCyan}MetaTrans  &\cellcolor{LightCyan}$\mathbf{48.0}$ & \cellcolor{LightCyan}$\mathbf{50.4}$  & \cellcolor{LightCyan}$\mathbf{47.4}$ &\cellcolor{LightCyan}$\mathbf{56.6}$ 
&\cellcolor{LightCyan}$\mathbf{48.5}$ 
&\cellcolor{LightCyan}$\mathbf{55.1}$
&\cellcolor{LightCyan}$\mathbf{51.0}$
\\ \bottomrule
\end{tabular}
}
\end{table}

%% file: Tables/permutation_ek.tex
\begin{table}[h]
\centering
\caption{Permutation Invariance Analysis on the Epic-Kitchens dataset.}
\label{tab.apermutation_ek}
\renewcommand{\arraystretch}{1}
\scalebox{0.8}{
\begin{tabular}{c|cccccc|c}
\toprule
\textbf{Model Variants} &
$\mathbf{P08}\to\mathbf{P01}$ & $\mathbf{P08}\to\mathbf{P22}$ & $\mathbf{P01}\to\mathbf{P08}$ & $\mathbf{P01}\to\mathbf{P22}$ & $\mathbf{P22}\to\mathbf{P08}$ & $\mathbf{P22}\to\mathbf{P01}$ & \textbf{Average} $\uparrow$
\\  \midrule
\cellcolor{red!6}Source-only ($\mathcal{S}_{\text{only}}$) & \cellcolor{red!6}$32.8$  & \cellcolor{red!6}$34.1$  & \cellcolor{red!6}$35.4$ &\cellcolor{red!6}$39.1$  &\cellcolor{red!6}$34.6$ &\cellcolor{red!6}$35.8$ &\cellcolor{red!6}$35.3$ \\  \midrule
\textit{MetaTrans}$\_fs\_pooling$ &$42.0$ & $41.1$  & $43.7$ & $50.6$  & $40.6$ & $50.7$ & $44.8$ \\ 
\textit{Bi-LSTM}$\_pooling$ &$44.4$ & $41.5$  & $44.6$ & $51.0$ & $42.8$ & $52.1$ & $46.1$\\  \midrule
\cellcolor{LightCyan}MetaTrans  &\cellcolor{LightCyan}$\mathbf{48.0}$ & \cellcolor{LightCyan}$\mathbf{50.4}$  & \cellcolor{LightCyan}$\mathbf{47.4}$ &\cellcolor{LightCyan}$\mathbf{56.6}$ 
&\cellcolor{LightCyan}$\mathbf{48.5}$ 
&\cellcolor{LightCyan}$\mathbf{55.1}$
&\cellcolor{LightCyan}$\mathbf{51.0}$
\\ \bottomrule
\end{tabular}
}
\end{table}